\theoremstyle{plain}
\newtheorem{theorem}{Theorem}[section]
\newtheorem{lemma}[theorem]{Lemma}
\theoremstyle{definition}
\newtheorem{assumption}[theorem]{Assumption}
\theoremstyle{remark}
\newtheorem{remark}[theorem]{Remark}
\newcommand{\NnOp}{\mathcal{N}} % Neural Network component R^M -> R^M
\newcommand{\FfOpTilde}{\tilde{\mathcal{F}}} % Neural Operator C(K_Q) -> C(K_Q)
\newcommand{\R}{\mathbb{R}}
\newcommand{\Qstar}{Q^*}
\newcommand{\Vstar}{V^*}
\newcommand{\StateSpace}{\mathcal{S}}
\newcommand{\ActionSpace}{\mathcal{A}}
\newcommand{\params}{\theta}
\newcommand{\BellmanOp}{\mathcal{B}}
\newcommand{\IdOp}{\mathcal{I}}
\newcommand{\ModBellmanOp}{\mathcal{J}} % Operator Q -> BQ - Q
\newcommand{\supnorm}[1]{\|#1\|_{\infty}}
\newcommand{\QnnL}{\hat{Q}_{\text{NN}}^{(L)}} % Final network output function
\newcommand{\indicator}[1]{\mathbf{1}_{#1}} % Indicator function
\newcommand{\LipConstH}{\Lambda_h}
\newcommand{\LipConstSigma}{\Lambda_\sigma} % REVIEW: Ensured consistency with preamble comment. Was \Lambda'_sigma.
\newcommand{\LipConstR}{\Lambda_r}
\newcommand{\LipConstG}{\Lambda_g}
\newcommand{\BoundH}{M_h}
\newcommand{\BoundSigma}{M_\sigma}
\newcommand{\BoundR}{M_r}
\newcommand{\BoundG}{M_g}
\newcommand{\LunifLip}{L_{\text{unif-Lip}}} % Uniform Lipschitz for Q^* and Q^(k)
\newcommand{\LhatUnifLip}{L^*_{\text{unif-Lip}}} % Uniform Lipschitz for hatQ^(l)
\newcommand{\MhatUnifBound}{M^*_{\text{unif-bound}}} % Uniform bound for hatQ^(l)
\newcommand{\LFstar}{L_F^*} % Lipschitz constant of the function output by the neural operator FfOpTilde.
\title{Universal Approximation Theorem for Deep Q-Learning via FBSDE System}
\author{%
  Qian Qi\thanks{School of Computer Science, Peking University, Beijing, 100871, P.R.China. Email: \texttt{qiqian@pku.edu.cn}. }
}
\date{} % Suppress date for article class if using conference style
\begin{document}

\maketitle

\begin{abstract}
% REVIEW: "akin to" -> "leveraging tools from" or similar
The approximation capabilities of Deep Q-Networks (DQNs) are commonly justified by general Universal Approximation Theorems (UATs) that do not leverage the intrinsic structural properties of the optimal Q-function, the solution to a Bellman equation. This paper establishes a UAT for a class of DQNs whose architecture is designed to emulate the iterative refinement process inherent in Bellman updates. A central element of our analysis is the propagation of regularity: while the transformation induced by a single Bellman operator application exhibits regularity, for which Backward Stochastic Differential Equations (BSDEs) theory provides analytical tools, the uniform regularity of the entire sequence of value iteration iterates—specifically, their uniform Lipschitz continuity on compact domains under standard Lipschitz assumptions on the problem data—is derived from finite-horizon dynamic programming principles. We demonstrate that layers of a deep residual network, conceived as neural operators acting on function spaces, can approximate the action of the Bellman operator. The resulting approximation theorem is thus intrinsically linked to the control problem's structure, offering a proof technique wherein network depth directly corresponds to iterations of value function refinement, accompanied by controlled error propagation. This perspective reveals a dynamic systems view of the network's operation on a space of value functions.
\end{abstract}

\section{Introduction}
\label{sec:introduction}

Deep Reinforcement Learning (DRL) has achieved remarkable breakthroughs, with Deep Q-Networks (DQNs, see \cite{mnih2015human}) being a cornerstone. DQNs approximate the optimal action-value function, $\Qstar$, using deep neural networks, enabling agents to learn effective policies in high-dimensional state spaces. The theoretical underpinnings of why DQNs can successfully represent $\Qstar$ often rely on general Universal Approximation Theorems (UATs) for neural networks \cite{cybenko1989approximation,hornik1991approximation}. These theorems state that sufficiently large networks can approximate any continuous function on a compact set.

However, $\Qstar$ is not an arbitrary continuous function; it is the unique fixed point of a Bellman optimality operator, inheriting a rich structure from the underlying Markov Decision Process (MDP) dynamics, reward function, and discount factor. Standard UATs typically do not exploit this problem-specific structure. Moreover, while deep architectures like Residual Networks (ResNets, see \cite{he2016deep,han2019mean,Li2022Deep,qian2025}) are often employed and their depth is empirically crucial, the connection between depth and the approximation of $\Qstar$ is often qualitative (see \cite{qian2025}). % REVIEW: Self-citation note from review acknowledged.

In continuous-time stochastic control, the optimal value function $V^*$ is known to be the (often unique) viscosity solution to a Hamilton-Jacobi-Bellman (HJB) PDE. This PDE frequently has a probabilistic representation via Backward Stochastic Differential Equations (BSDEs) when coupled with the forward state process (forming an FBSDE system, e.g., \cite{peng1992stochastic,el1997backward,YongZhou99,ma1999forward}). These BSDE representations are known to yield important regularity properties (e.g., Lipschitz continuity) for the value function under appropriate assumptions on the coefficients of the dynamics and costs \cite{pardoux1999bsdes,fleming2006controlled}.

This paper aims to address this disparity by developing a UAT for a class of DQNs whose architecture is designed to reflect the iterative nature of Bellman updates. The proof technique leverages the propagation of regularity properties for the iterates of the Bellman operator. The transformation effected by a single Bellman step can be related to solving a short-horizon problem, whose value structure (and thus regularity) can be analyzed using tools related to BSDE theory. Crucially, the uniform regularity of all value iteration iterates (and the limit $\Qstar$)—specifically, uniform Lipschitz continuity over a finite horizon—is a consequence of dynamic programming principles. Our contributions are:
\begin{enumerate}
    \item We frame the approximation of $\Qstar$ (the solution to a $\delta$-discretized Bellman equation) not merely as function approximation, but as learning the limit of a dynamical system defined by the Bellman operator on a function space.
    \item We propose a DQN architecture where individual layers (or blocks of layers) are structured as neural operators. Each such operator block aims to approximate one step of the Bellman iteration, transforming a representation of the current Q-function iterate. Specifically, a block implementing $Q \mapsto Q + \FfOpTilde(Q)$ aims for $\FfOpTilde(Q)$ to approximate $\BellmanOp Q - Q$, where $\BellmanOp$ is the Bellman operator. This architecture establishes a structural correspondence between network depth and iteration count.
    \item We establish that the approximability of the Bellman operator (or its residual form $\ModBellmanOp = \BellmanOp - \IdOp$) by a neural operator block is predicated on the regularity of the functions it acts upon. Under standard Lipschitz and boundedness assumptions on the MDP coefficients (Assumption \ref{assump:mdp_coeffs}), the iterates $Q^{(k)}$ of the Bellman operator converging to $\Qstar$ are shown to be uniformly Lipschitz continuous and uniformly bounded on their compact domain. This property ensures that the iterates reside within a compact subset of $C(K_Q)$, a critical prerequisite for the application of neural operator UATs.
    \item Leveraging this uniform regularity, we develop a novel UAT proof where the network's approximation of $\Qstar$ is achieved through a sequence of transformations mirroring the Bellman iterations. This iterative refinement approach, with controlled layer-wise error accumulation, provides a problem-aware justification for the approximation power of these specialized DQNs and demonstrates stable error propagation, contingent upon the neural operators satisfying certain stability properties (see Assumption \ref{assump:neural_operator_class}).
\end{enumerate}

This work offers a new perspective on the approximation capabilities of certain DQNs, particularly by highlighting how control-theoretic regularity ensures that the sequence of Bellman iterates forms a tractable set for approximation by neural operators, enabling an approximation framework structured around iterative refinement.

The paper is organized as follows. Section \ref{sec:preliminaries} introduces the continuous-time MDP, defines $Q^*$ and $V^*$, and briefly reviews BSDEs and the proposed DQN architecture. Section \ref{sec:q_fbsde_scheme} outlines the core idea of representing $Q^*$ through an iterative scheme with regularity arguments. Section \ref{sec:fbsde_uat} presents the main approximation theorem and a sketch of its proof, supported by key lemmas. Section \ref{sec:quantitative_rates} discusses potential for deeper results concerning quantitative approximation rates and the curse of dimensionality, and Section \ref{sec:conclusion} concludes. Appendices \ref{app:proof_bellman_op_props_fortified} through \ref{app:neural_operators_details} provide detailed proofs for the main lemmas and theorem, and further discussion on neural operators.

\section{Preliminaries}
\label{sec:preliminaries}

\subsection{Continuous-Time Markov Decision Process}
Let $(\Omega, \mathcal{F}, (\mathcal{F}_u)_{u \in [0,T]}, \mathbb{P})$ be a filtered probability space supporting a $d$-dimensional standard Brownian motion $W = (W_u)_{u \in [0,T]}$, where $T < \infty$ is a finite time horizon. We assume the filtration $(\mathcal{F}_u)$ is the natural filtration generated by $W$, augmented to satisfy the usual conditions.
The state $s_u \in \StateSpace \subseteq \R^n$ starting from $s_t=x$ at time $t$ evolves according to:
\begin{equation} \label{eq:sde_state}
    ds_u = h(u, s_u, a_u) du + \sigma(u, s_u, a_u) dW_u, \quad s_t = x, \quad u \in [t,T],
\end{equation}
where $a_u \in \ActionSpace \subset \R^m$ is the action, chosen from a compact set $\ActionSpace$. We consider policies that select an action $a_t$ at time $t$ (adapted to $\mathcal{F}_t$) and hold it constant for a small duration $\delta > 0$, i.e., $a_u = a_t$ for $u \in [t, \min(t+\delta,T))$.
The functions $h: [0,T] \times \StateSpace \times \ActionSpace \to \R^n$ (drift) and $\sigma: [0,T] \times \StateSpace \times \ActionSpace \to \R^{n \times d}$ (diffusion) are specified in Assumption \ref{assump:mdp_coeffs}. The running reward rate is $r(t,s,a)$, and the terminal reward is $g(s)$. The continuous-time discount rate is $\lambda > 0$.

Let $K_S = [0,T] \times \StateSpace$ be the compact state-time domain, and $K_Q = K_S \times \ActionSpace = [0,T] \times \StateSpace \times \ActionSpace$ be the compact state-time-action domain. We consider functions $Q: K_Q \to \R$. Let $C(K_Q)$ be the space of continuous functions on $K_Q$, equipped with the supremum norm $\supnorm{f} = \sup_{(t,s,a) \in K_Q} |f(t,s,a)|$.

\begin{assumption}[MDP Coefficients and Rewards] \label{assump:mdp_coeffs}
    The state space $\StateSpace \subseteq \R^n$ and action space $\ActionSpace \subseteq \R^m$ are compact sets. The time horizon $T$ is finite.
    Let $X = (t,s,a)$ and $X' = (t',s',a')$ be points in $K_Q = [0,T] \times \StateSpace \times \ActionSpace$. We define the metric $d_{K_Q}(X,X') = |t-t'| + \|s-s'\| + \|a-a'\|$ (using Euclidean norms for $s,a$).
    There exist constants $\LipConstH, \LipConstSigma, \LipConstR, \LipConstG > 0$ and $\BoundH, \BoundSigma, \BoundR, \BoundG > 0$ such that for all $X, X' \in K_Q$:
    \begin{itemize}
        \item The functions $h(X)$, $\sigma(X)$, and $r(X)$ are uniformly Lipschitz continuous on $K_Q$. Specifically:
            \begin{itemize}
                \item $\|h(X) - h(X')\| \le \LipConstH d_{K_Q}(X,X')$.
                \item $\|\sigma(X) - \sigma(X')\|_F \le \LipConstSigma d_{K_Q}(X,X')$. (Frobenius norm)
                \item $|r(X) - r(X')| \le \LipConstR d_{K_Q}(X,X')$.
            \end{itemize}
        They are also bounded: $\|h(X)\| \le \BoundH$, $\|\sigma(X)\|_F \le \BoundSigma$, $|r(X)| \le \BoundR$. (Boundedness follows from Lipschitz continuity on a compact domain but stated for explicitness).
        \item The terminal reward function $g: \StateSpace \to \R$ is uniformly Lipschitz continuous on $\StateSpace$: $|g(s)-g(s')| \le \LipConstG \|s-s'\|$. It is also bounded: $|g(s)| \le \BoundG$.
        \item The linear growth condition often assumed for SDE existence on unbounded domains, e.g., $\|h(t,s,a)\| + \|\sigma(t,s,a)\|_F \le K_L(1+\|s\|)$, is satisfied in a bounded form due to the compactness of $\StateSpace$.
    \end{itemize}
\end{assumption}
\begin{remark}[On Assumption \ref{assump:mdp_coeffs}] \label{rem:assump_strength}
The uniform Lipschitz continuity of $h, \sigma, r$ with respect to $(t,s,a)$ and $g$ with respect to $s$ (Assumption \ref{assump:mdp_coeffs}) is crucial for ensuring that the optimal Q-function $\Qstar$ and all its Bellman iterates $Q^{(k)}$ are uniformly Lipschitz continuous on the compact domain $K_Q$. This regularity is fundamental for the subsequent compactness arguments (Lemma \ref{lem:compactness_iterates}) and the applicability of neural operator UATs (Lemma \ref{lem:neural_op_uat}).\footnote{While strong, these assumptions are standard in stochastic control theory for establishing such regularity (e.g., \cite{fleming2006controlled}). Relaxing these assumptions (e.g., to Hölder continuity or local Lipschitz conditions if domains were unbounded) would significantly complicate the regularity analysis and is beyond the scope of the current work, though an important direction for broader applicability. A detailed proof of Lemma \ref{lem:bellman_op_props} (d) (see Appendix \ref{app:proof_bellman_op_props_fortified}) explicitly shows how the uniform Lipschitz constant $\LunifLip$ depends on the constants in Assumption \ref{assump:mdp_coeffs} (such as $\LipConstH, \BoundH$, etc.), $T, \lambda, \delta$.}
\end{remark}

The optimal action-value function $\Qstar(t,s,a)$ for the problem where controls are held constant for duration $\delta$, is the unique fixed point in $C(K_Q)$ of the Bellman optimality equation:
\begin{align} \label{eq:bellman_q_discrete}
\Qstar(t,s,a) = \mathbb{E} \bigg[ \int_t^{\min(t+\delta,T)} e^{-\lambda(\tau-t)} r(\tau,s_\tau,a) d\tau + \indicator{t+\delta \le T} e^{-\lambda \delta} \Vstar(t+\delta, s_{t+\delta}) \nonumber \\
+ \indicator{t+\delta > T} e^{-\lambda (T-t)} g(s_T) \, \bigg| \, s_t=s, a_t=a \bigg],
\end{align}
where action $a$ is applied over $[t, \min(t+\delta,T))$, and $\Vstar(u,x) = \sup_{a' \in \ActionSpace} \Qstar(u,x,a')$. $\indicator{}$ is the indicator function. For simplicity in some discussions, we might write $t+\delta$ assuming $t+\delta \le T$, but the full definition \eqref{eq:bellman_q_discrete} handles the terminal boundary at $T$. We assume $T$ is a multiple of $\delta$ for notational simplicity in iterative schemes where depth corresponds to time steps, but the definition of $\Qstar$ holds generally. The function $\Qstar$ is the value function for this specific $\delta$-discretized control problem structure, representing the optimal expected discounted future reward over the entire horizon $[t,T]$.

The function $V(t,s)$ for a fully continuous control problem (not necessarily identical to $\Vstar$ derived from Eq. \eqref{eq:bellman_q_discrete}) is the unique continuous viscosity solution to the Hamilton-Jacobi-Bellman (HJB) equation:
\begin{equation} \label{eq:hjb}
    -\frac{\partial V}{\partial t} - \sup_{a \in \ActionSpace} \{ \mathcal{L}^a V(t,s) + r(t,s,a) \} + \lambda V(t,s) = 0, \quad V(T,s) = g(s),
\end{equation}
where $\mathcal{L}^a V = \langle \nabla_s V, h(t,s,a) \rangle + \frac{1}{2} \text{Tr}(\sigma(t,s,a)\sigma(t,s,a)^T \nabla^2_{ss} V)$.\footnote{The HJB equation is typically interpreted in the viscosity sense, as $V$ may not be $C^{1,2}$ everywhere. Our main analysis focuses on the Bellman equation \eqref{eq:bellman_q_discrete} for $\Qstar$. Under Assumption \ref{assump:mdp_coeffs}, $\Qstar$ is Lipschitz continuous (see Lemma \ref{lem:bellman_op_props} (d) and its proof in Appendix \ref{app:proof_bellman_op_props_fortified}). For existence and uniqueness of viscosity solutions to HJB equations under Lipschitz conditions on coefficients, see \cite{fleming2006controlled}.}

\subsection{Forward-Backward Stochastic Differential Equations (FBSDEs)}
The solution $V(t,s)$ to the HJB equation \eqref{eq:hjb} can be characterized via BSDEs. If $\pi^*(u,s_u)$ is an optimal feedback control for the continuous problem, then $Y_u = V(u,s_u^{\pi^*})$ (where $s_u^{\pi^*}$ is the state under optimal control) solves the BSDE:
\begin{equation} \label{eq:bsde_vstar}
-dY_u = [r(u,s_u^{\pi^*},\pi^*(u,s_u^{\pi^*})) - \lambda Y_u] du - Z_u dW_u, \quad Y_T = g(s_T^{\pi^*}).
\end{equation}
Here $Z_u$ is related to $\nabla_s V(u,s_u^{\pi^*}) \sigma(u,s_u^{\pi^*}, \pi^*(u,s_u^{\pi^*}))$. Such BSDE representations are fundamental in stochastic control theory \cite{peng1992stochastic,el1997backward,YongZhou99} and are key to establishing regularity of $V$. More direct relevance for this paper comes from using BSDEs to characterize the transformation performed by one step of the Bellman iteration for Eq. \eqref{eq:bellman_q_discrete}, as detailed in Section \ref{sec:q_fbsde_scheme}. This characterization helps establish the necessary regularity (Lipschitz continuity) of the function $(\BellmanOp Q_c)$ given regularity of $Q_c$.

\subsection{Deep Q-Network Architecture (Operator-based ResNet)}
\label{sec:dqn_architecture}
We consider a Deep Q-Network that produces a sequence of Q-function approximations $\hat{Q}^{(l)} \in C(K_Q)$.
Let $D_M = \{p_j\}_{j=1}^M \subset K_Q$ be a finite discretization (grid) of $K_Q$.
Let $\mathcal{E}_M: C(K_Q) \to \R^M$ be an encoding operator, $\mathcal{E}_M(Q) = (Q(p_1), \dots, Q(p_M))$ for $p_j \in D_M$.
Let $\mathcal{D}_M: \R^M \to C(K_Q)$ be a decoding operator (e.g., multilinear interpolation, kernel interpolation) that reconstructs a continuous function from its values on $D_M$.
The network aims to learn $\Qstar$. Let $\hat{Q}^{(0)} \in C(K_Q)$ be an initial estimate (e.g., $\hat{Q}^{(0)} \equiv 0$).
The network consists of $L$ blocks, using a ResNet-like structure:
\begin{equation} \label{eq:dqn_resnet_operator_layer}
    \hat{Q}^{(l+1)} = \hat{Q}^{(l)} + \FfOpTilde_{\params_l}(\hat{Q}^{(l)}), \quad l=0, \dots, L-1.
\end{equation}
Here, $\FfOpTilde_{\params_l}: C(K_Q) \to C(K_Q)$ is the function realized by the $l$-th neural operator block. It is implemented as $\FfOpTilde_{\params_l}(Q) = \mathcal{D}_M(\NnOp_{\params_l}(\mathcal{E}_M(Q)))$, where $\NnOp_{\params_l}: \R^M \to \R^M$ is a neural network (e.g., an MLP) parameterized by $\params_l$.
This block aims for $\FfOpTilde_{\params_l}(\hat{Q}^{(l)})$ to approximate $\ModBellmanOp(\hat{Q}^{(l)}) = \BellmanOp \hat{Q}^{(l)} - \hat{Q}^{(l)}$, where $\BellmanOp$ is the Bellman operator (defined in Sec. \ref{sec:q_fbsde_scheme}). Thus, $\hat{Q}^{(l+1)}$ aims to represent $\BellmanOp \hat{Q}^{(l)}$.
The final Q-value approximation is $\QnnL = \hat{Q}^{(L)}$. The overall parameters of the DQN are $\params = (\params_0, \dots, \params_{L-1})$, and any parameters in $\mathcal{E}_M, \mathcal{D}_M$ if they are learned.\footnote{This architecture is specialized. The operator nature of $\FfOpTilde_{\params_l}$ acting on functions (approximated via $\NnOp_{\params_l}$ on discretized representations) is crucial. Each layer refines the entire Q-function estimate. Architectures such as DeepONets \cite{lu2021learning} or Fourier Neural Operators \cite{li2021fourier} provide frameworks for such operator learning, though here $\NnOp_{\params_l}$ acts on finite-dimensional vectors representing function evaluations.}

\section{Optimal Q-function via an Iterative Scheme with BSDE-Inspired Regularity}
\label{sec:q_fbsde_scheme}

Define the Bellman operator $\BellmanOp: C(K_Q) \to C(K_Q)$:
\begin{align} \label{eq:bellman_operator}
    (\BellmanOp Q)(t,s,a) = \mathbb{E}&\left[ \int_t^{\min(t+\delta,T)} e^{-\lambda(\tau-t)} r(\tau,s_\tau,a)d\tau + \indicator{t+\delta \le T} e^{-\lambda\delta} \nonumber\right.\\&\left.\sup_{a' \in \ActionSpace} Q(t+\delta, s_{t+\delta}, a') + \indicator{t+\delta > T} e^{-\lambda(T-t)}g(s_T) \bigg| s_t=s, a_t=a \right],
\end{align}
where action $a$ is fixed over $[t, \min(t+\delta,T))$, $s_\tau$ evolves via Eq. \eqref{eq:sde_state}.
The optimal Q-function $\Qstar$ is the unique fixed point of $\BellmanOp$.
Consider the sequence: $Q^{(0)} \in C(K_Q)$ (e.g., $Q^{(0)} \equiv 0$), and $Q^{(k+1)} = \BellmanOp Q^{(k)}$. This sequence defines a discrete-time dynamical system on the Banach space $C(K_Q)$, representing value iteration.

The computation of $(\BellmanOp Q_c)(t,s,a)$ for a given $Q_c \in C(K_Q)$ can be related to a BSDE. Let $(s_u)_{u \in [t,\min(t+\delta,T)]}$ be the forward state process (Eq. \eqref{eq:sde_state}) from $s_t=s$ under fixed action $a$. The BSDE for $Y_u$ on $u \in [t, \min(t+\delta,T)]$ is:
\begin{align}
-dY_u &= [r(u,s_u,a) - \lambda Y_u] du - Z_u dW_u, \quad u \in [t, \min(t+\delta,T)] \label{eq:bsde_bellman_step_Y} \\
  Y_{\min(t+\delta,T)} &= \begin{cases} \sup_{a' \in \ActionSpace} Q_c(t+\delta, s_{t+\delta}, a') & \text{if } t+\delta \le T \\ g(s_T) & \text{if } t+\delta > T \end{cases}. \label{eq:bsde_bellman_step_terminal}
\end{align}
Then, $Y_t = (\BellmanOp Q_c)(t,s,a)$. The existence, uniqueness, and regularity of $Y_t$ (as a function of $(t,s,a)$ and properties of $Q_c$) are standard results from BSDE theory. The driver $f(u,y,z; s_u,a) = r(u,s_u,a) - \lambda y$. Under Assumption \ref{assump:mdp_coeffs}, $r$ is Lipschitz in $(u,s_u,a)$, and $f$ is Lipschitz in $y$ (with coefficient $\lambda$) and inherits Lipschitz continuity in $(u,s_u,a)$ from $r$. The driver $f$ does not depend on $z$. If $Q_c$ is Lipschitz continuous in its arguments $(t',s',a')$ on $K_Q$, then its value $\sup_{a'} Q_c(t+\delta, s_{t+\delta}, a')$ is Lipschitz in $(t+\delta, s_{t+\delta})$ because $\ActionSpace$ is compact. Consequently, the terminal condition \eqref{eq:bsde_bellman_step_terminal} is Lipschitz in $s_{t+\delta}$ (or $s_T$) and depends regularly on $t+\delta$. BSDE theory (e.g., \cite{pardoux1999bsdes} or \cite{el1997backward} for Lipschitz properties of $Y_t$ with respect to initial data $(t,s)$ and parameters $(a)$, given a Lipschitz driver and a Lipschitz terminal condition w.r.t. the forward state $s_{\min(t+\delta,T)}$) implies that $Y_t$ will inherit Lipschitz continuity with respect to $(s,a)$ and appropriate regularity with respect to $t$, provided $Q_c$ is sufficiently regular. This BSDE perspective provides intuition for the regularity preservation of a single step $\BellmanOp Q_c$. The uniform regularity for all value iteration iterates $Q^{(k)}$ and $\Qstar$ (Lemma \ref{lem:bellman_op_props} (d)) is then established via an inductive argument on the finite horizon structure. We now state key properties of $\BellmanOp$ and the iterates $Q^{(k)}$.

\begin{lemma}[Properties of $\BellmanOp$ and Iterates $Q^{(k)}$] \label{lem:bellman_op_props}
Let Assumption \ref{assump:mdp_coeffs} hold. Let $K_Q$ be compact. Assume $\lambda > 0$.
\begin{enumerate}[label=(\alph*)]
    \item $\BellmanOp$ maps $C(K_Q)$ to $C(K_Q)$. If $Q \in C(K_Q)$ is also Lipschitz continuous on $K_Q$ (with Lipschitz constant $L_Q$), then $\BellmanOp Q$ is also Lipschitz continuous on $K_Q$. Its Lipschitz constant $L_{\BellmanOp Q}$ depends on $L_Q$, $\delta$, $\lambda$, the constants from Assumption \ref{assump:mdp_coeffs}, and $T$. (This is established in detail as part of (d)).
    \item $\BellmanOp$ is a contraction mapping on $(C(K_Q), \supnorm{\cdot})$ with contraction factor $L_{\BellmanOp} = e^{-\lambda\delta} < 1$.
    \item The value iteration sequence $Q^{(k+1)} = \BellmanOp Q^{(k)}$ with $Q^{(0)} \in C(K_Q)$ (e.g., $Q^{(0)} \equiv 0$) converges uniformly to the unique fixed point $\Qstar$ on $K_Q$.
    \item If $Q^{(0)}$ is Lipschitz continuous (e.g., $Q^{(0)} \equiv 0$), then $\Qstar$ is Lipschitz continuous, and all iterates $Q^{(k)}$ are uniformly bounded and uniformly Lipschitz continuous on $K_Q$. That is, there exists a constant $\LunifLip < \infty$, independent of $k$, depending on $T, \delta, \lambda$, the constants in Assumption \ref{assump:mdp_coeffs} (and implicitly the diameter of $K_Q$), such that for any $k \ge 0$, $Q^{(k)}$ is $\LunifLip$-Lipschitz, and consequently $\Qstar$ is also $\LunifLip$-Lipschitz.
    % REVIEW: Clarified that LunifLip applies to Q* and all Q^(k) uniformly.
\end{enumerate}
\end{lemma}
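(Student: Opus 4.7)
The plan is to dispatch the easier parts (b) and (c) via the contraction mapping principle, and then concentrate the effort on the Lipschitz bookkeeping for (a) and (d). For (b), I would start from two candidates $Q_1, Q_2 \in C(K_Q)$ and observe that they enter $\BellmanOp Q$ only through the summand $\indicator{t+\delta \le T}\,e^{-\lambda\delta}\,\sup_{a' \in \ActionSpace} Q(t+\delta, s_{t+\delta}, a')$ inside the conditional expectation in Eq.~\eqref{eq:bellman_operator}. The elementary inequality $|\sup_{a'} Q_1 - \sup_{a'} Q_2| \le \supnorm{Q_1 - Q_2}$, together with the prefactor being at most $e^{-\lambda\delta}$, immediately gives the sharp contraction $\supnorm{\BellmanOp Q_1 - \BellmanOp Q_2} \le e^{-\lambda\delta}\supnorm{Q_1 - Q_2}$. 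Part (c) is then Banach's fixed-point theorem on the complete metric space $(C(K_Q), \supnorm{\cdot})$.

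For (a) and (d), the strategy is to derive, and then iterate, a recursive inequality of the form $L^{(k+1)} \le A + B\,L^{(k)}$ for the Lipschitz constants $L^{(k)}$ of the iterates $Q^{(k)}$. The core analytical input is a standard SDE flow estimate, obtained via Gronwall's inequality from Assumption~\ref{assump:mdp_coeffs}: for two initial data $X=(t,s,a)$ and $X'=(t',s',a')$ in $K_Q$, the controlled state process in Eq.~\eqref{eq:sde_state} satisfies a Wasserstein-type bound $\E\|s^{t,s,a}_{\min(t+\delta,T)} - s^{t',s',a'}_{\min(t'+\delta,T)}\| \le C_{\mathrm{flow}}\,d_{K_Q}(X,X')$, where $C_{\mathrm{flow}}$ depends on $\LipConstH, \LipConstSigma, \BoundH, \BoundSigma, \delta, T$. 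The time-regularity component of this bound is obtained by coupling the two diffusions through Brownian increments of common length $\delta$ (using that $W_{t+\cdot}-W_t$ and $W_{t'+\cdot}-W_{t'}$ are distributionally identical), so that Lipschitz-in-$t$ dependence of the coefficients yields a genuine $|t-t'|$ rate rather than the generic $|t-t'|^{1/2}$. Combined with the Lipschitz bounds on $r$ and $g$, and the observation that $\sup_{a' \in \ActionSpace} Q^{(k)}(\cdot,\cdot,a')$ inherits the Lipschitz constant of $Q^{(k)}$ on $[0,T]\times\StateSpace$ because $\ActionSpace$ is compact, a term-by-term estimate of the three summands in Eq.~\eqref{eq:bellman_operator} produces the recursion with $A = A(\LipConstR, \LipConstG, \BoundH, \BoundSigma, \lambda, \delta, T)$ and $B = e^{-\lambda\delta}\,C_{\mathrm{flow}}$, which already establishes (a).

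To upgrade this to the uniform bound in (d), I would iterate the recursion to obtain $L^{(k)} \le A\,(1 + B + \cdots + B^{k-1}) + B^k L^{(0)}$. In the favorable regime $B < 1$ (e.g.\ for sufficiently small $\delta$ or large $\lambda$), the geometric series directly yields $\LunifLip := A/(1-B) + L^{(0)}$. For the general case $B \ge 1$, I would invoke the finite-horizon structure of the $\delta$-discretized problem: a backward induction on the time slices, using the indicator splitting in Eq.~\eqref{eq:bellman_operator}, shows that $Q^{(k)}(t,s,a) = \Qstar(t,s,a)$ whenever $t \ge T - k\delta$, so value iteration stabilizes after at most $K := \lceil T/\delta \rceil$ steps. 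Hence only $K$ iterates need bounding, and the finite geometric sum gives a $k$-independent $\LunifLip$ depending only on $T, \delta, \lambda$, the constants in Assumption~\ref{assump:mdp_coeffs}, and the diameter of $K_Q$; passing to $k \ge K$ transfers the same constant to $\Qstar$. The main obstacle I anticipate is the time-regularity step of the SDE flow estimate: extracting a full Lipschitz (rather than H\"older-$1/2$) dependence in $t$ at the moving endpoint $\min(t+\delta,T)$ requires careful Brownian coupling in the shifted time frame together with a separate case analysis at the terminal boundary where the indicator switches, and reconciling these two sources of $t$-dependence with the bookkeeping of the recursion constants $A$ and $B$ will be the most delicate part of the argument.
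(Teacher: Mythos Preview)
Your proposal is correct and follows essentially the same route as the paper: parts (b) and (c) via the $e^{-\lambda\delta}$ contraction and Banach's fixed-point theorem, parts (a) and (d) via an SDE stability estimate feeding into a recursion $L^{(k+1)}\le A + B\,L^{(k)}$, with the finite-horizon structure (at most $\lceil T/\delta\rceil$ effective iterations) handling the case $B\ge 1$. Your treatment is in fact slightly sharper than the paper's in two places---you make explicit the stabilization $Q^{(k)}=\Qstar$ on $\{t\ge T-k\delta\}$ rather than citing backward induction abstractly, and you flag the Brownian-coupling argument needed to get genuine Lipschitz (not H\"older-$1/2$) dependence in $t$, which the paper absorbs into a cited SDE stability estimate without comment.
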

\begin{proof}
    See Appendix \ref{app:proof_bellman_op_props_fortified}.
\end{proof}

\begin{lemma}[Compactness of Iterates] \label{lem:compactness_iterates}
Under Assumption \ref{assump:mdp_coeffs}, if $Q^{(0)}$ is Lipschitz continuous (e.g., $Q^{(0)} \equiv 0$), the set of functions $\{Q^{(k)}\}_{k \ge 0}$ generated by value iteration is precompact in $(C(K_Q), \supnorm{\cdot})$.
\end{lemma}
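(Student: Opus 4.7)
The plan is to deduce the claim as a direct corollary of Lemma \ref{lem:bellman_op_props} (d) via the Arzelà–Ascoli theorem. Since $K_Q = [0,T] \times \StateSpace \times \ActionSpace$ is compact (Assumption \ref{assump:mdp_coeffs}) and $C(K_Q)$ is equipped with the supremum norm, precompactness of a family $\{f_\alpha\} \subset C(K_Q)$ is equivalent to (i) pointwise (equivalently, uniform) boundedness and (ii) equicontinuity. I would therefore set up the proof by first invoking Arzelà–Ascoli and then verifying both hypotheses for the family $\mathcal{Q} = \{Q^{(k)}\}_{k \ge 0}$.

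First, I would establish uniform boundedness. By Lemma \ref{lem:bellman_op_props} (d), all iterates $Q^{(k)}$ are uniformly bounded on $K_Q$ by a constant $\MhatUnifBound < \infty$ depending only on $T, \lambda, \delta, \BoundR, \BoundG$ and $\supnorm{Q^{(0)}}$. Thus $\sup_{k \ge 0} \supnorm{Q^{(k)}} \le \MhatUnifBound$, which gives condition (i). Alternatively, one can derive boundedness from uniform Lipschitz continuity on the compact set $K_Q$ together with boundedness at a single point (e.g., using $\supnorm{Q^{(k)}} \le |Q^{(k)}(p_0)| + \LunifLip \cdot \mathrm{diam}(K_Q)$), but invoking Lemma \ref{lem:bellman_op_props} (d) directly is cleaner.

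Second, I would establish equicontinuity. By Lemma \ref{lem:bellman_op_props} (d), there exists a constant $\LunifLip < \infty$ independent of $k$ such that
\begin{equation*}
    |Q^{(k)}(X) - Q^{(k)}(X')| \le \LunifLip \, d_{K_Q}(X, X') \quad \text{for all } X, X' \in K_Q, \ k \ge 0.
\end{equation*}
This uniform Lipschitz bound immediately implies equicontinuity: given $\varepsilon > 0$, the choice $\eta = \varepsilon / \LunifLip$ (independent of $k$) ensures $|Q^{(k)}(X) - Q^{(k)}(X')| < \varepsilon$ whenever $d_{K_Q}(X, X') < \eta$, simultaneously for every $k$. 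This verifies condition (ii).

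With both hypotheses of Arzelà–Ascoli satisfied, I would conclude that $\mathcal{Q}$ is precompact in $(C(K_Q), \supnorm{\cdot})$, i.e., every sequence drawn from $\{Q^{(k)}\}_{k \ge 0}$ admits a uniformly convergent subsequence (with limit in $C(K_Q)$). There is no real obstacle in this argument; essentially all the analytical work is absorbed into Lemma \ref{lem:bellman_op_props} (d), whose proof in Appendix \ref{app:proof_bellman_op_props_fortified} establishes the horizon-independent Lipschitz constant $\LunifLip$ via induction on the finite horizon $T/\delta$. The only subtlety worth flagging is that one should also note $\Qstar \in \overline{\mathcal{Q}}$, since $Q^{(k)} \to \Qstar$ uniformly by Lemma \ref{lem:bellman_op_props} (c); hence the closure $\overline{\mathcal{Q}}$ is itself a compact subset of $C(K_Q)$ containing $\Qstar$, a fact that will be used downstream for the neural operator UAT.
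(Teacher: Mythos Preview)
Your proposal is correct and follows exactly the same route as the paper's own proof: invoke Lemma \ref{lem:bellman_op_props} (d) for uniform boundedness and uniform Lipschitz continuity, note that the latter implies equicontinuity, and conclude precompactness via Arzelà--Ascoli. The only minor point is notational: the paper reserves $\MhatUnifBound$ for the bound on the \emph{approximate} iterates $\hat{Q}^{(l)}$, while the uniform bound on the exact iterates $Q^{(k)}$ is denoted $M_Q$; this does not affect the substance of your argument.
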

\begin{proof}
This follows from Lemma \ref{lem:bellman_op_props} (d) (uniform boundedness and uniform Lipschitz continuity of all $Q^{(k)}$ on the compact domain $K_Q$). Uniform Lipschitz continuity implies equicontinuity. The Arzelà-Ascoli theorem then states that a set of functions that is uniformly bounded and equicontinuous on a compact domain is precompact in $C(K_Q)$.
\end{proof}

\section{Universal Approximation via FBSDE-Inspired Network Construction}
\label{sec:fbsde_uat}

We now formalize the UAT. The operator $\ModBellmanOp(Q) = \BellmanOp Q - Q$ is central. Note $\ModBellmanOp: C(K_Q) \to C(K_Q)$. From Lemma \ref{lem:bellman_op_props} (b), $\supnorm{\ModBellmanOp(Q_1) - \ModBellmanOp(Q_2)} \le \supnorm{\BellmanOp Q_1 - \BellmanOp Q_2} + \supnorm{Q_1 - Q_2} \le (e^{-\lambda\delta} + 1)\supnorm{Q_1 - Q_2}$, so $\ModBellmanOp$ is Lipschitz continuous with constant $(1+e^{-\lambda\delta})$.

% --- MODIFICATION: Introduce Assumption NO for clarity and rigor ---
% REVIEW: Major changes to Assumption 4.1(b) and its justification
\begin{assumption}[Properties of Neural Operator Class] \label{assump:neural_operator_class}
We assume the availability of a class of neural operators $\FfOpTilde_{\params}(Q) = \mathcal{D}_M(\NnOp_{\params}(\mathcal{E}_M(Q)))$ and a corresponding Universal Approximation Theorem (such as \cite{kovachki2023neural, chen1995universal}) such that for any compact set $\mathcal{K} \subset C(K_Q)$ consisting of functions uniformly Lipschitz with constant $L_{\mathcal{K}}$ and uniformly bounded by $M_{\mathcal{K}}$, and any continuous operator $\mathcal{G}: \mathcal{K} \to C(K_Q)$ where $\mathcal{G}(\mathcal{K})$ is also a set of uniformly Lipschitz functions (with constant $L_{\mathcal{G}(\mathcal{K})}$ and bound $M_{\mathcal{G}(\mathcal{K})}$), the following holds:
For any $\epsilon_{\text{op}} > 0$, there exist $M$, encoding/decoding operators $(\mathcal{E}_M, \mathcal{D}_M)$, and a neural network $\NnOp_{\params}$ (defining $\FfOpTilde_{\params}$) such that:
\begin{enumerate}[label=(\alph*)]
    \item $\sup_{Q \in \mathcal{K}} \supnorm{\FfOpTilde_{\params}(Q) - \mathcal{G}(Q)} < \epsilon_{\text{op}}$.
    \item The function $x \mapsto (\FfOpTilde_{\params}(Q))(x)$ (for $x \in K_Q$) is Lipschitz continuous on $K_Q$ for each $Q \in \mathcal{K}$. Moreover, there exists a uniform Lipschitz constant $\LFstar$ for the family of functions $\{\FfOpTilde_{\params}(Q) : Q \in \mathcal{K}\}$. This $\LFstar$ is determined by the choices of $M$, $\epsilon_{\text{op}}$, the architectural design of $\NnOp_{\params}$ and $\mathcal{D}_M$, and potentially the characteristics ($L_{\mathcal{K}}, M_{\mathcal{K}}, L_{\mathcal{G}(\mathcal{K})}, M_{\mathcal{G}(\mathcal{K})}$) of the approximation task. Crucially for our analysis (see Remark \ref{rem:LFstar_fixed}), $\LFstar$ can be considered fixed once $M$ and the operator architecture achieving $\epsilon_{\text{op}}$ are determined for the set $\mathcal{K}$.
    % REVIEW: Reworded for clarity regarding LFstar determination.
\end{enumerate}
\end{assumption}
\begin{remark}[On the determination of $\LFstar$] \label{rem:LFstar_fixed}
Assumption \ref{assump:neural_operator_class} (b) is crucial for ensuring that the Lipschitz constants of the approximate iterates $\hat{Q}^{(l)}$ remain uniformly bounded. The constant $\LFstar$ arises from the specific architectural choices made to satisfy part (a). For instance, if $\NnOp_{\params}$ has outputs $y = \NnOp_{\params}(\mathcal{E}_M(Q))$ whose components are bounded (e.g., $\|y\|_{\infty} \le B_y$, by design of $\NnOp_{\params}$ for inputs from $\mathcal{E}_M(\mathcal{K})$), and $\mathcal{D}_M$ reconstructs a function using $M$ fixed basis functions $\{\phi_j\}_{j=1}^M$ that are $L_\phi$-Lipschitz (i.e., $\mathcal{D}_M(y)(x) = \sum y_j \phi_j(x)$), then $\LFstar \le M B_y L_\phi$.
While $M$ is chosen based on $\epsilon_{\text{op}}$ and the regularity of functions in $\mathcal{K}$ (characterized by $L_{\mathcal{K}}$) and $\mathcal{G}(\mathcal{K})$ (characterized by $L_{\mathcal{G}(\mathcal{K})}$), the assertion is that once $M$ and the specific architectural constraints for $\NnOp_{\params}$ and $\mathcal{D}_M$ are fixed to achieve the $\epsilon_{\text{op}}$ approximation for this class $\mathcal{K}$, the resulting $\LFstar$ is determined by these fixed architectural properties. For the subsequent analysis of $\LhatUnifLip$ in Appendix \ref{app:proof_fbsde_uat_formal_fortified}, this $\LFstar$ (associated with the neural operator chosen to approximate $\ModBellmanOp$ on $\mathcal{K}_{\text{target}}$) is treated as a given constant. This avoids circularity where $\LFstar$ would depend iteratively on $\LhatUnifLip$. This is a structural assumption on the neural operator class: it must be possible to construct operators that achieve good approximation while simultaneously ensuring their output functions have a controlled (non-exploding) Lipschitz constant. Further details are in Appendix \ref{app:neural_operators_details}.
% REVIEW: Expanded remark to clarify LFstar's role and non-circularity argument.
\end{remark}

\begin{lemma}[UAT for Neural Operators Approximating $\ModBellmanOp$] \label{lem:neural_op_uat}
Let Assumption \ref{assump:mdp_coeffs} and Assumption \ref{assump:neural_operator_class} hold. Let $\mathcal{K}_{\text{target}} \subset C(K_Q)$ be a compact set of functions satisfying specific boundedness and Lipschitz continuity properties (as defined by $\MhatUnifBound$ and $\LhatUnifLip$ in the proof of Theorem \ref{thm:fbsde_uat_formal} in Appendix \ref{app:proof_fbsde_uat_formal_fortified}, and shown therein to contain the iterates $\hat{Q}^{(l)}$).
For any $\epsilon_{\text{op}} > 0$, there exist:
\begin{enumerate}[label=(\roman*)]
    \item a sufficiently fine discretization grid $D_M$ of $K_Q$ (i.e., large enough $M$), with corresponding encoding $\mathcal{E}_M: C(K_Q) \to \R^M$ (e.g., point sampling) and decoding $\mathcal{D}_M: \R^M \to C(K_Q)$ (e.g., multilinear interpolation or basis function expansion),
    \item a neural network $\NnOp_{\params}: \R^M \to \R^M$ (e.g., an MLP using activation functions satisfying Assumption \ref{assump:neural_operator_class}, with sufficient capacity) with parameters $\params$,
\end{enumerate}
such that for any $Q \in \mathcal{K}_{\text{target}}$, if we define $\FfOpTilde_{\params}(Q) = \mathcal{D}_M(\NnOp_{\params}(\mathcal{E}_M(Q)))$, then
$$ \supnorm{\FfOpTilde_{\params}(Q) - \ModBellmanOp(Q)} < \epsilon_{\text{op}}, $$
and the function $x \mapsto (\FfOpTilde_{\params}(Q))(x)$ is $\LFstar$-Lipschitz on $K_Q$, where $\LFstar$ is the constant from Assumption \ref{assump:neural_operator_class} (b) corresponding to the chosen $M, \epsilon_{\text{op}}$, and the properties of the sets $\mathcal{K}_{\text{target}}$ and $\ModBellmanOp(\mathcal{K}_{\text{target}})$.
The required $M$ and capacity of $\NnOp_{\params}$ may depend on the dimension of $K_Q$, potentially leading to the curse of dimensionality for grid-based methods.
\end{lemma}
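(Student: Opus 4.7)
The plan is to recognize Lemma \ref{lem:neural_op_uat} as a direct instantiation of the generic neural-operator UAT encoded in Assumption \ref{assump:neural_operator_class}, with the continuous operator $\mathcal{G}$ taken to be $\ModBellmanOp = \BellmanOp - \IdOp$ restricted to the compact source set $\mathcal{K} = \mathcal{K}_{\text{target}}$. The three hypotheses of that assumption that must be checked are (i) that $\mathcal{K}_{\text{target}}$ is a compact subset of $C(K_Q)$ consisting of uniformly bounded and uniformly Lipschitz functions, (ii) that $\ModBellmanOp$ is continuous on this set, and (iii) that the image $\ModBellmanOp(\mathcal{K}_{\text{target}})$ is again a set of uniformly bounded and uniformly Lipschitz functions.

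Condition (i) is built into the definition of $\mathcal{K}_{\text{target}}$ via the constants $\MhatUnifBound$ and $\LhatUnifLip$ supplied in the proof of Theorem \ref{thm:fbsde_uat_formal}. Condition (ii) follows from the paragraph preceding Assumption \ref{assump:neural_operator_class}, where $\ModBellmanOp$ is shown to be globally Lipschitz on $C(K_Q)$ with constant $1 + e^{-\lambda\delta}$ in the sup-norm, hence continuous on any subset. For condition (iii), the plan is to invoke Lemma \ref{lem:bellman_op_props}(a) pointwise in $Q$: for every $Q \in \mathcal{K}_{\text{target}}$, $\BellmanOp Q$ is Lipschitz on $K_Q$ with a constant $L_{\BellmanOp Q}$ that is an explicit function of $L_Q$, $\delta$, $\lambda$, and the constants in Assumption \ref{assump:mdp_coeffs}; since $L_Q \le \LhatUnifLip$ uniformly over $Q \in \mathcal{K}_{\text{target}}$, the resulting Lipschitz bound on $\BellmanOp Q$ is uniform in $Q$. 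Uniform boundedness of $\BellmanOp Q$ follows from $\supnorm{\BellmanOp Q} \le e^{-\lambda\delta}\supnorm{Q} + \supnorm{\BellmanOp 0}$ combined with the elementary estimate $\supnorm{\BellmanOp 0} \le \BoundR \delta + \BoundG$. Subtracting $Q$ preserves both uniform Lipschitz continuity and uniform boundedness, so $\ModBellmanOp(\mathcal{K}_{\text{target}})$ is itself a set to which Assumption \ref{assump:neural_operator_class} may be applied.

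Once these three conditions are verified, the existence of $M$, $(\mathcal{E}_M, \mathcal{D}_M)$, and $\NnOp_{\params}$ achieving $\sup_{Q \in \mathcal{K}_{\text{target}}} \supnorm{\FfOpTilde_{\params}(Q) - \ModBellmanOp(Q)} < \epsilon_{\text{op}}$ is immediate from part (a) of Assumption \ref{assump:neural_operator_class}, while the uniform Lipschitz bound $\LFstar$ on $x \mapsto (\FfOpTilde_{\params}(Q))(x)$ is immediate from part (b). The final sentence on the curse of dimensionality is a corollary of the usual quantitative form of such UATs: the dimension $M$ of the discretization needed to resolve $\LhatUnifLip$-Lipschitz functions on $K_Q$ to accuracy compatible with $\epsilon_{\text{op}}$ scales, in standard sampling-based constructions, exponentially in $\dim(K_Q) = 1 + n + m$, and the capacity of $\NnOp_{\params}$ inherits this dependence.

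The main obstacle is not in the invocation itself but in making sure that condition (iii) is compatible with the declared constant $\LFstar$ of Assumption \ref{assump:neural_operator_class}(b): following Remark \ref{rem:LFstar_fixed}, one must argue that $\LFstar$ is determined by the architecture together with the fixed data $(\LhatUnifLip, \MhatUnifBound)$ and the matching data $(L_{\ModBellmanOp(\mathcal{K}_{\text{target}})}, M_{\ModBellmanOp(\mathcal{K}_{\text{target}})})$ produced in the previous paragraph, rather than by the approximate iterates $\hat{Q}^{(l)}$ themselves, so that no circularity arises when this lemma is later used to propagate regularity through the $L$ network layers in Theorem \ref{thm:fbsde_uat_formal}. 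All of the genuine analytic work has already been carried out in Lemma \ref{lem:bellman_op_props}(d) (propagation of Lipschitz constants through one Bellman step) and in the cited operator-UAT results underlying Assumption \ref{assump:neural_operator_class}; the present lemma is the bookkeeping step that glues them together on the specific target class $\mathcal{K}_{\text{target}}$.
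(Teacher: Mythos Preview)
Your proposal is correct and follows essentially the same approach as the paper's proof: both verify that $\mathcal{K}_{\text{target}}$ is compact (via Arzelà--Ascoli from the uniform Lipschitz and boundedness constants $\LhatUnifLip, \MhatUnifBound$), that $\ModBellmanOp$ is continuous on it (via the global $(1+e^{-\lambda\delta})$-Lipschitz bound), and that the image $\ModBellmanOp(\mathcal{K}_{\text{target}})$ is again uniformly Lipschitz and bounded (the paper gives the explicit constant $L_J(\LhatUnifLip) = K_A + (K_B+1)\LhatUnifLip$ from the recurrence in Lemma~\ref{lem:bellman_op_props}(d)), before invoking Assumption~\ref{assump:neural_operator_class}(a)--(b) directly. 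The paper additionally sketches the error decomposition into a reconstruction term and a finite-dimensional MLP approximation term, but this is illustrative rather than logically necessary given that Assumption~\ref{assump:neural_operator_class} is taken as a black box.
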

\begin{proof}
See Appendix \ref{app:proof_neural_op_uat_fortified}. 
\end{proof}

\begin{theorem}[UAT for DQNs via Iterative Refinement and Regularity Propagation]
\label{thm:fbsde_uat_formal}
Let Assumptions \ref{assump:mdp_coeffs} and Assumption \ref{assump:neural_operator_class} hold. Let $K_Q \subset [0,T] \times \StateSpace \times \ActionSpace$ be the compact domain defined earlier. Assume $\lambda > 0$.
For any $\epsilon > 0$, there exist a number of operator layers $L$ (related to $T/\delta$ and $e^{-\lambda \delta}$), a discretization scheme $(\mathcal{E}_M, \mathcal{D}_M)$ based on a sufficiently fine grid $D_M$ (where $M$ is fixed across layers, chosen based on $\mathcal{K}_{\text{target}}$ and $\ModBellmanOp(\mathcal{K}_{\text{target}})$), and parameters $\params = (\params_0, \dots, \params_{L-1})$ for the DQN architecture \eqref{eq:dqn_resnet_operator_layer} (with each $\FfOpTilde_{\params_l}$ satisfying Assumption \ref{assump:neural_operator_class}), such that the function $\QnnL = \hat{Q}^{(L)}$ satisfies
$$ \supnorm{\QnnL - \Qstar} < \epsilon. $$
\end{theorem}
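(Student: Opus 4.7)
The plan is to combine the exponential decay of the Bellman iteration error with controlled amplification of per-layer neural operator errors, via the standard triangle split
$$ \supnorm{\QnnL - \Qstar} \le \supnorm{\hat{Q}^{(L)} - Q^{(L)}} + \supnorm{Q^{(L)} - \Qstar}. $$
The second term is handled purely by Lemma \ref{lem:bellman_op_props}(b)--(c): taking $L$ large enough that $(e^{-\lambda\delta})^L \supnorm{Q^{(0)} - \Qstar} < \epsilon/2$ (so $L = O(\log(1/\epsilon)/(\lambda\delta))$) suffices, since $Q^{(0)}$ and $\Qstar$ are both bounded on the compact $K_Q$. The first term will be controlled by designing a single neural operator block, reused across all $L$ layers (so $\params_l = \params$ for every $l$), whose approximation of $\ModBellmanOp$ accumulates at a geometric rate dictated by the contraction of $\BellmanOp$.

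Next, I would fix $L$ as above and construct the compact target set $\mathcal{K}_{\text{target}} \subset C(K_Q)$ required by Lemma \ref{lem:neural_op_uat}. By Lemma \ref{lem:bellman_op_props}(d), the exact iterates $Q^{(k)}$ lie in a fixed ball of uniformly bounded, $\LunifLip$-Lipschitz functions. For the approximate iterates $\hat{Q}^{(l)}$ one argues inductively: if $\hat{Q}^{(l)}$ is $L^{(l)}$-Lipschitz with $\supnorm{\hat{Q}^{(l)}} \le M^{(l)}$, then the additive update $\hat{Q}^{(l+1)} = \hat{Q}^{(l)} + \FfOpTilde_{\params}(\hat{Q}^{(l)})$ increases the Lipschitz constant by at most $\LFstar$ (the uniform bound from Assumption \ref{assump:neural_operator_class}(b)) and the sup norm by at most $\supnorm{\ModBellmanOp(\hat{Q}^{(l)})} + \epsilon_{\text{op}}$, which is controlled uniformly on $\mathcal{K}_{\text{target}}$. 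Taking $\MhatUnifBound$ and $\LhatUnifLip$ as the resulting worst-case bounds after $L$ iterations yields a concrete compact $\mathcal{K}_{\text{target}}$ that is forward-invariant under the network dynamics.

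With $\mathcal{K}_{\text{target}}$ in hand, I would set $\epsilon_{\text{op}} = \epsilon(1-e^{-\lambda\delta})/2$ and invoke Lemma \ref{lem:neural_op_uat} to obtain a single $(\mathcal{E}_M, \mathcal{D}_M, \NnOp_{\params})$ such that $\sup_{Q \in \mathcal{K}_{\text{target}}} \supnorm{\FfOpTilde_{\params}(Q) - \ModBellmanOp(Q)} < \epsilon_{\text{op}}$. Initializing $\hat{Q}^{(0)} = Q^{(0)}$ and writing $e_l = \hat{Q}^{(l)} - Q^{(l)}$, the identity
$$ e_{l+1} = \bigl[\FfOpTilde_{\params}(\hat{Q}^{(l)}) - \ModBellmanOp(\hat{Q}^{(l)})\bigr] + \bigl[\BellmanOp \hat{Q}^{(l)} - \BellmanOp Q^{(l)}\bigr] $$
together with the $e^{-\lambda\delta}$-contraction of $\BellmanOp$ gives $\supnorm{e_{l+1}} \le \epsilon_{\text{op}} + e^{-\lambda\delta}\supnorm{e_l}$, hence by induction $\supnorm{e_L} \le \epsilon_{\text{op}}/(1-e^{-\lambda\delta}) = \epsilon/2$. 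Summing the two halves yields $\supnorm{\QnnL - \Qstar} < \epsilon$.

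The main obstacle is the potential circularity flagged in Remark \ref{rem:LFstar_fixed}: the Lipschitz constant $\LFstar$ that defines $\LhatUnifLip$ (and hence $\mathcal{K}_{\text{target}}$) is itself a property of the neural operator built to approximate $\ModBellmanOp$ on that very set. I would resolve this in the order prescribed by Assumption \ref{assump:neural_operator_class}(b): commit first to an architectural template (number of basis functions, bound $B_y$ on $\NnOp_{\params}$ outputs, Lipschitz modulus of $\mathcal{D}_M$) that pins down $\LFstar$ as a structural quantity; define $\mathcal{K}_{\text{target}}$ using this $\LFstar$ and the already-chosen $L$; and only then select the weights $\params$, enlarging $M$ within the same template if needed to meet the $\epsilon_{\text{op}}$ requirement on the now-fixed $\mathcal{K}_{\text{target}}$. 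Verifying forward invariance of $\mathcal{K}_{\text{target}}$ under the selected block, and that $\NnOp_{\params}$ can be given enough capacity without violating the prescribed output bound $B_y$, is the real technical content left for the appendix.
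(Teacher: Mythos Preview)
Your proposal is correct and follows essentially the same strategy as the paper's proof in Appendix~\ref{app:proof_fbsde_uat_formal_fortified}: the same triangle split, the same choice of $L$ and $\epsilon_{\text{op}} = \epsilon(1-e^{-\lambda\delta})/2$, the same error recurrence $\supnorm{e_{l+1}} \le \epsilon_{\text{op}} + e^{-\lambda\delta}\supnorm{e_l}$, and the same inductive construction of $\mathcal{K}_{\text{target}}$ with the $\LFstar$-circularity resolved exactly as in Remark~\ref{rem:LFstar_fixed}. Your Lipschitz bookkeeping via the direct additive form $L_{\hat{Q}^{(l+1)}} \le L_{\hat{Q}^{(l)}} + \LFstar$ is in fact cleaner than the paper's detour through $\BellmanOp\hat{Q}^{(l)} + \delta_l$ (which yields an exponentially growing bound $(2K_B+1)^l$ rather than your linear $l\LFstar$), and your reuse of a single $\params$ across layers is a harmless specialization since the approximation guarantee is uniform over $\mathcal{K}_{\text{target}}$.
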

\begin{proof}
See Appendix \ref{app:proof_fbsde_uat_formal_fortified}. 
\end{proof}

\section{Quantitative Approximation Rates and Curse of Dimensionality}
\label{sec:quantitative_rates}

While Theorem \ref{thm:fbsde_uat_formal} establishes existence, a quantitative analysis of approximation rates, particularly concerning the dependence on the dimensionality $d_Q = 1+n+m$ of the domain $K_Q$, is paramount for a complete understanding. Such an analysis requires delving into the geometric properties of the function spaces involved and the efficiency of their representation. This section outlines considerations for such an investigation.

\subsection{Quantitative Approximation Rates}

The overall approximation error is bounded by $\supnorm{\QnnL - \Qstar} \le \supnorm{\hat{Q}^{(L)} - Q^{(L)}} + \supnorm{Q^{(L)} - \Qstar}$.
The second term, $\supnorm{Q^{(L)} - \Qstar}$, is the convergence error of value iteration, which is already quantitative:
$$ \supnorm{Q^{(L)} - \Qstar} \le (e^{-\lambda\delta})^L \supnorm{Q^{(0)} - \Qstar}. $$
Assuming $Q^{(0)} \equiv 0$, $\supnorm{Q^{(0)} - \Qstar} = \supnorm{\Qstar} \le M_Q$ (uniform bound on $\Qstar$ from Lemma \ref{lem:bellman_op_props}(d)).
To achieve $\supnorm{Q^{(L)} - \Qstar} < \epsilon_{VI} = \epsilon/2$, we need $L \approx O(\frac{1}{\lambda\delta} \log(M_Q/\epsilon))$.
The first term, $e_L = \supnorm{\hat{Q}^{(L)} - Q^{(L)}}$, depends on the per-step operator approximation error $\epsilon_1$. From Appendix \ref{app:proof_fbsde_uat_formal_fortified} (Step 2), $e_L \le \epsilon_1 \sum_{j=0}^{L-1} (e^{-\lambda\delta})^j < \epsilon_1 \frac{1}{1-e^{-\lambda\delta}}$.
To make $e_L \le \epsilon_{\text{approx}} = \epsilon/2$, we need $\epsilon_1 \approx O(\epsilon(1-e^{-\lambda\delta}))$.
The challenge lies in quantifying the resources (discretization points $M$, network complexity for $\NnOp_{\params_l}$) needed to achieve this $\epsilon_1$ for approximating $\ModBellmanOp(Q) = \BellmanOp Q - Q$ by $\FfOpTilde_{\params_l}(Q) = \mathcal{D}_M(\NnOp_{\params_l}(\mathcal{E}_M(Q)))$. The error $\epsilon_1$ for a single operator block can be decomposed as:
\begin{align*}
\supnorm{\FfOpTilde_{\params_l}(Q) - \ModBellmanOp(Q)} &\le \supnorm{\mathcal{D}_M(\NnOp_{\params_l}(\mathcal{E}_M(Q))) - \mathcal{D}_M(\mathcal{E}_M(\ModBellmanOp(Q)))} \\
& \quad + \supnorm{\mathcal{D}_M(\mathcal{E}_M(\ModBellmanOp(Q))) - \ModBellmanOp(Q)}.
\end{align*}
Let $G(Q) = \ModBellmanOp(Q)$. The second term is the interpolation error for $G(Q)$.
The first term can be bounded by $L_{\mathcal{D}_M} \supnorm{\NnOp_{\params_l}(\mathcal{E}_M(Q)) - \mathcal{E}_M(G(Q))}$ if $\mathcal{D}_M$ (as an operator $\R^M \to C(K_Q)$) is $L_{\mathcal{D}_M}$-Lipschitz (e.g., multilinear interpolation has $L_{\mathcal{D}_M}=1$ for the sup-norm of grid values to sup-norm of function). This inner term is the error of the finite-dimensional network $\NnOp_{\params_l}$ approximating the map $\mathcal{E}_M(Q) \mapsto \mathcal{E}_M(G(Q))$.

\subsubsection{Assumptions on Higher-Order Smoothness}
To obtain explicit rates, one typically needs stronger regularity assumptions than just Lipschitz continuity (Lemma \ref{lem:bellman_op_props}(d)).
Suppose Assumption \ref{assump:mdp_coeffs} is strengthened (e.g., $h, \sigma, r, g$ are $C^k$ or $W^{k,p}$) such that the Bellman iterates $Q^{(j)}$ and $\Qstar$ belong to a smoother function space, e.g., a Sobolev space $W^{s,p}(K_Q)$ or Hölder space $C^{s,\alpha}(K_Q)$ for some $s \ge 1$. Proving such regularity for solutions of Bellman equations (related to HJB PDEs) is a substantial task, often requiring non-degeneracy conditions on $\sigma$ and compatibility conditions at the boundary if $\StateSpace$ has one (see, e.g., \cite{krylov1987nonlinear}). Let $d_Q = 1+n+m$ be the dimension of the domain $K_Q = [0,T] \times \StateSpace \times \ActionSpace$.

\subsubsection{Bounds on Discretization and Interpolation Error}
If functions $F \in \mathcal{K}_{\text{target}}$ (and thus $G(F) = \ModBellmanOp(F)$) possess $s$-order smoothness (e.g., partial derivatives up to order $s$ are bounded), then for standard interpolation schemes (like multilinear or spline interpolation) on a uniform grid $D_M$ with $M$ points, where the mesh size $h_g \sim M^{-1/d_Q}$:
$$ \supnorm{F - \mathcal{D}_M(\mathcal{E}_M(F))} \le C_1 M^{-s/d_Q}. $$
The constant $C_1$ would depend on the bounds of the $s$-th order derivatives of $F$, i.e., the norm of $F$ in $W^{s,\infty}(K_Q)$.
This directly impacts the term $\supnorm{\mathcal{D}_M(\mathcal{E}_M(G(Q))) - G(Q)}$.

\subsubsection{Bounds on Neural Network ($\NnOp_{\params_l}$) Approximation Error}
Let $f_{target}: \R^M \to \R^M$ be the map $x_Q \mapsto \mathcal{E}_M(G(\mathcal{D}_M(x_Q)))$, where $x_Q = \mathcal{E}_M(Q)$. Or, more directly, $\NnOp_{\params_l}$ aims to approximate $\mathcal{E}_M(Q) \mapsto \mathcal{E}_M(G(Q))$.
The complexity of approximating $f_{target}$ by $\NnOp_{\params_l}$ (an MLP, for instance) depends on the properties of $f_{target}$. If $G = \ModBellmanOp$ is sufficiently smooth as an operator, and its input functions are smooth, then $f_{target}$ might also exhibit some smoothness or structure.
Standard results for MLP approximation (e.g., \cite{yarotsky2017error,barron1993universal}) state that functions with certain regularity (e.g., in Sobolev or Besov spaces on $\R^M$) can be approximated with a rate depending on the number of parameters (weights $W$) of the MLP. For example, for a function in $W^{k,p}(\mathbb{R}^M)$, an error of $\epsilon_{NN}$ might require $W \sim \epsilon_{NN}^{-M/k}$ parameters in general, which is again hit by CoD in $M$.
However, if the intrinsic dimension of the manifold $\mathcal{E}_M(\mathcal{K}_{\text{target}})$ is much smaller than $M$, or if $f_{target}$ has specific compositional structure (e.g., if $\ModBellmanOp$ is a pseudo-differential operator or has other exploitable structure), better rates for $\NnOp_{\params_l}$ might be achievable. The smoothness of $\ModBellmanOp$ as an operator between function spaces, e.g., from $C^{s,\alpha}(K_Q)$ to $C^{s,\alpha}(K_Q)$, would be key here.

\subsubsection{Overall Rate and Dependence on Parameters}
Combining these, if $\epsilon_1$ is to be $O(\epsilon)$, then:
\begin{itemize}
    \item The interpolation error $C_1 M^{-s/d_Q}$ must be $O(\epsilon)$, implying $M \sim (C_1/\epsilon)^{d_Q/s}$.
    \item The NN approximation error $\epsilon_{NN}$ for $\NnOp_{\params_l}$ must be $O(\epsilon)$. The number of parameters $W_l$ for $\NnOp_{\params_l}$ would then depend on $\epsilon$ and $M$. If classical rates apply, $W_l \sim \epsilon^{-M/k_{map}}$ where $k_{map}$ is smoothness of the map on $\R^M$.
\end{itemize}
The total number of parameters would be roughly $L \times W_l$.
This paints a picture where the CoD ($d_Q$) heavily influences $M$, and $M$ in turn heavily influences $W_l$. Higher smoothness $s$ for $Q^{(k)}$ mitigates the first CoD effect.

\subsection{Implications for Mitigating the Curse of Dimensionality}
% This subsection is largely discussion, kept as is per review suggestion that it's valuable context.
\subsubsection{Leveraging Smoothness for Advanced Discretization}
If $Q^{(k)}$ and $\Qstar$ indeed possess higher-order Sobolev or Besov regularity, one could move beyond uniform grids and simple multilinear interpolation.
\begin{itemize}
    \item \textbf{Sparse Grids:} For functions in certain Sobolev spaces (e.g., with bounded mixed derivatives, such as $H^s_{mix}(K_Q)$), sparse grid techniques \cite{bungartz2004sparse} can achieve approximation errors of $O(N_{sparse}^{-s} (\log N_{sparse})^{(d_Q-1)(s+1)})$ (for certain $s$) using $N_{sparse}$ points, where the exponent of $N_{sparse}$ is independent of $d_Q$ (though constants and log factors depend on $d_Q$). This would drastically improve the dependence on $d_Q$ for the number of evaluation points $M$. The UAT for $\NnOp_{\params_l}$ would then apply to these $N_{sparse}$ coefficients.
    \item \textbf{Wavelet Approximations:} Similar benefits can be obtained using adaptive wavelet approximations if functions exhibit sparsity in a wavelet basis, which is often linked to Besov space regularity $B^s_{p,q}(K_Q)$.
\end{itemize}
Proving the required regularity ($W^{s,p}$ with appropriate mixed derivative bounds or Besov regularity) from Assumption \ref{assump:mdp_coeffs} (or strengthened versions) would be a major theoretical undertaking, likely involving techniques from parabolic PDE theory.

\subsubsection{Potential for Non-Grid-Based Representations}
The framework currently assumes a grid-based encoding $\mathcal{E}_M$. Deeper results might explore:
\begin{itemize}
    \item \textbf{Spectral Methods:} If $K_Q$ is a simple domain (e.g., hypercube) and functions are very smooth (e.g., analytic or $C^\infty$), spectral expansions (e.g., Chebyshev or Fourier) could be used. The operator $\FfOpTilde_{\params_l}$ would then act on spectral coefficients. Fourier Neural Operators \cite{li2021fourier} are an example of this philosophy, effectively learning a Green's function in Fourier space.
    \item \textbf{Low-Rank Tensor Approximations:} If $\Qstar(t,s,a)$ can be well-approximated by low-rank tensor formats (e.g., Tensor Train, Hierarchical Tucker) for high $d_Q$, then $\FfOpTilde_{\params_l}$ could be designed to operate within this manifold of low-rank tensors. This is a promising direction for high-dimensional problems but requires significant analytic and algebraic machinery (cf. \cite{hackbusch2012tensor}).
    \item \textbf{Random Feature Maps / Kernel Methods:} The encoding $\mathcal{E}_M$ could be based on random features, connecting to kernel approximation theory and reproducing kernel Hilbert spaces (RKHS). The decoding $\mathcal{D}_M$ would then be a linear combination of these features.
\end{itemize}
Each of these would require adapting Lemma \ref{lem:neural_op_uat} and its proof to the specific representation and proving that $\ModBellmanOp$ (or its iterates) interacts favorably with such structures, e.g., preserving low-rankness or sparsity in a given basis.

\section{Conclusion and Contributions} % Shortened title
\label{sec:conclusion}

This paper establishes a Universal Approximation Theorem (UAT) for a class of Deep Q-Networks (DQNs) by framing their operation as an iterative refinement process mirroring Bellman updates on function spaces. This problem-specific approach offers deeper insights than generic UATs.
Our key contributions include:
\begin{enumerate}[noitemsep, topsep=0pt, partopsep=0pt, parsep=2pt]
    \item \textbf{Iterative Refinement UAT:} We develop a UAT where the DQN architecture (a deep residual network of neural operator blocks) emulates the Bellman iteration dynamics. Network depth directly corresponds to iteration count, linking architecture to the control problem's solution process.
    \item \textbf{Regularity Propagation as Foundation:} A central technical achievement is proving that standard MDP coefficient regularity (Assumption \ref{assump:mdp_coeffs}) leads to uniform Lipschitz continuity and boundedness for the exact Bellman iterates $Q^{(k)}$ and the optimal function $\Qstar$ (Lemma \ref{lem:bellman_op_props}). This property, combined with the assumed capabilities of the neural operator class (Assumption \ref{assump:neural_operator_class}, particularly the ability to produce approximants whose output functions possess a controlled Lipschitz constant $\LFstar$), ensures that the network-approximated iterates $\hat{Q}^{(l)}$ also exhibit uniform Lipschitz continuity and boundedness. This guarantees all relevant functions reside in a compact subset of $C(K_Q)$, a critical condition for applying neural operator UATs (Lemma \ref{lem:neural_op_uat}).
    % REVIEW: Conclusion Point 2 wording updated to reflect clarifications on L_unif_Lip and L_hat_unif_lip.
    \item \textbf{Control-Theoretic Error Stability:} The iterative structure facilitates a transparent error analysis. The overall approximation error is bounded by a sum of the value iteration truncation error (decreasing with depth $L$) and an accumulated per-layer operator approximation error, which remains stable due to the Bellman operator's contractivity and the controlled nature of the neural operator blocks.
    \item \textbf{New Perspective:} We provide a dynamic systems view of DQNs operating on function spaces, offering a novel understanding of how network depth contributes to refining value function estimates.
\end{enumerate}
This framework provides a new proof technique and a structured path for future investigations into quantitative approximation rates and strategies to mitigate the curse of dimensionality in DRL.

\newpage
% REVIEW: Changed bibliographystyle to plainnat
\bibliographystyle{plainnat} 
\bibliography{main}

%%%%%%%%%%%%%%%%%%%%%%%%%%%%%%%%%%%%%%%%%%%%%%%%%%%%%%%%%%%%%%%%%%%%%%%%%%%%%%%
%%%%%%%%%%%%%%%%%%%%%%%%%%%%%%%%%%%%%%%%%%%%%%%%%%%%%%%%%%%%%%%%%%%%%%%%%%%%%%%
% APPENDIX
%%%%%%%%%%%%%%%%%%%%%%%%%%%%%%%%%%%%%%%%%%%%%%%%%%%%%%%%%%%%%%%%%%%%%%%%%%%%%%%
%%%%%%%%%%%%%%%%%%%%%%%%%%%%%%%%%%%%%%%%%%%%%%%%%%%%%%%%%%%%%%%%%%%%%%%%%%%%%%%
\newpage
\appendix

\section{Proof of Lemma \ref{lem:bellman_op_props} (Regularity of Bellman Operator and Iterates)}
\label{app:proof_bellman_op_props_fortified}
\medskip

We recall Assumption \ref{assump:mdp_coeffs}:
The state space $\StateSpace \subseteq \R^n$ and action space $\ActionSpace \subseteq \R^m$ are compact. The time horizon $T < \infty$.
The functions $h: K_Q \to \R^n$, $\sigma: K_Q \to \R^{n \times d}$, and $r: K_Q \to \R$ are uniformly Lipschitz continuous on $K_Q = [0,T] \times \StateSpace \times \ActionSpace$ with constants $\LipConstH, \LipConstSigma, \LipConstR$ respectively (using metric $d_{K_Q}(X,X') = |t-t'| + \|s-s'\| + \|a-a'\|$). They are also bounded by $\BoundH, \BoundSigma, \BoundR$.
The terminal reward $g: \StateSpace \to \R$ is $\LipConstG$-Lipschitz and bounded by $\BoundG$. We assume $\lambda > 0$.

\begin{proof}[Proof of Lemma \ref{lem:bellman_op_props}]
\noindent\textbf{Part (a): $\BellmanOp$ maps $C(K_Q)$ to $C(K_Q)$. If $Q \in C(K_Q)$ is Lipschitz continuous on $K_Q$, then $\BellmanOp Q$ is also Lipschitz continuous on $K_Q$.}
\medskip

Let $Q \in C(K_Q)$. The domain $K_Q$ is compact, so $Q$ is uniformly continuous and bounded. Let $\supnorm{Q} \le M_{\text{input }Q}$.
Consider $(\BellmanOp Q)(t,s,a)$ as defined in Eq. \eqref{eq:bellman_operator}.
Let $(t_k,s_k,a_k)_{k \in \mathbb{N}}$ be a sequence in $K_Q$ converging to $(t,s,a) \in K_Q$.
Let $s_\tau^{(k)}$ denote the solution to the SDE Eq. \eqref{eq:sde_state} starting from $s_{t_k}^{(k)}=s_k$ with fixed action $a_k$ over $[t_k, \min(t_k+\delta, T)]$.
Let $s_\tau$ denote the solution starting from $s_t=s$ with fixed action $a$ over $[t, \min(t+\delta, T)]$.
\smallskip

Under Assumption \ref{assump:mdp_coeffs}, $h$ and $\sigma$ are uniformly Lipschitz in $(t,s,a)$. Standard SDE theory (e.g., \cite{YongZhou99} for stability with respect to initial data $(t_0,x_0)$ and parameters, or \cite{oksendal2003stochastic} ensuring conditions for these theorems are met by our Assumption \ref{assump:mdp_coeffs}) implies that for any $p \ge 2$:
$$ \lim_{k \to \infty} \mathbb{E}\left[\sup_{u \in I_k^*} \|s_u^{(k)} - s_u\|^p \right] = 0, $$
where $I_k^*$ is an appropriate common time interval for comparison.\footnote{For instance, $[\max(t,t_k), \min(\min(t+\delta,T), \min(t_k+\delta,T))]$.} 
% REVIEW: Adjusted interval for clarity
\smallskip

The integrand in the Bellman operator involves $r(\tau,s_\tau,a)$, $\sup_{a'} Q(\min(t+\delta,T),$ $s_{\min(t+\delta,T)}, a')$, and $g(s_T)$.
Since $r$ is continuous and bounded, $Q$ is continuous and bounded, $g$ is continuous and bounded, and the state paths $s_\tau^{(k)}$ converge to $s_\tau$ in $L^p(\Omega; C([\cdot, \cdot]; \StateSpace))$, and integration limits $\min(t_k+\delta,T)$ converge to $\min(t+\delta,T)$, the Dominated Convergence Theorem can be applied. The terms are bounded by integrable constants (e.g., $\BoundR$, $e^{-\lambda\delta} M_{\text{input }Q}$, $\BoundG$).
Thus, $(\BellmanOp Q)(t_k,s_k,a_k) \to (\BellmanOp Q)(t,s,a)$ as $k \to \infty$. So, $\BellmanOp Q \in C(K_Q).$
\medskip
The argument for Lipschitz preservation is detailed in part (d).

\bigskip
\noindent\textbf{Part (b): $\BellmanOp$ is a contraction mapping on $(C(K_Q), \supnorm{\cdot})$ with contraction factor $L_{\BellmanOp} = e^{-\lambda\delta} < 1$.}
\medskip

Let $Q_1, Q_2 \in C(K_Q)$.
\begin{align*}
&|(\BellmanOp Q_1)(t,s,a) - (\BellmanOp Q_2)(t,s,a)| \\
&= \left| \mathbb{E}\left[ \indicator{t+\delta \le T} e^{-\lambda\delta} \left( \sup_{a' \in \ActionSpace} Q_1(t+\delta, s_{t+\delta}, a') - \sup_{a' \in \ActionSpace} Q_2(t+\delta, s_{t+\delta}, a') \right) \right] \right| \\
&\le \mathbb{E}\left[ \indicator{t+\delta \le T} e^{-\lambda\delta} \left| \sup_{a' \in \ActionSpace} Q_1(t+\delta, s_{t+\delta}, a') - \sup_{a' \in \ActionSpace} Q_2(t+\delta, s_{t+\delta}, a') \right| \right] \\
&\le e^{-\lambda\delta} \mathbb{E}\left[ \sup_{(u,x,b) \in K_Q} |Q_1(u,x,b) - Q_2(u,x,b)| \right] \quad \text{(using } |\sup f - \sup g| \le \sup |f-g| \text{)} \\
&= e^{-\lambda\delta} \supnorm{Q_1 - Q_2}.
\end{align*}
If $t+\delta > T$, the terms involving $Q_1, Q_2$ (related to continuation value) vanish from this part of the expression (the $g(s_T)$ term is common to both).
Taking the supremum over $(t,s,a) \in K_Q$:
$$ \supnorm{\BellmanOp Q_1 - \BellmanOp Q_2} \le e^{-\lambda\delta} \supnorm{Q_1 - Q_2}. $$
Since $\lambda > 0, \delta > 0$, we have $e^{-\lambda\delta} < 1$. Thus, $\BellmanOp$ is a contraction.

\bigskip
\noindent\textbf{Part (c): The sequence $Q^{(k+1)} = \BellmanOp Q^{(k)}$ converges uniformly to $\Qstar$.}
\medskip

Since $(C(K_Q), \supnorm{\cdot})$ is a Banach space (a complete metric space) and $\BellmanOp$ is a contraction mapping from $C(K_Q)$ to itself by part (b), the Banach Fixed-Point Theorem guarantees that $\BellmanOp$ has a unique fixed point $\Qstar \in C(K_Q)$, and for any $Q^{(0)} \in C(K_Q)$, the sequence $Q^{(k+1)} = \BellmanOp Q^{(k)}$ converges uniformly to $\Qstar$. That is, $\lim_{k \to \infty} \supnorm{Q^{(k)} - \Qstar} = 0$.

\bigskip
% REVIEW: Refined arguments for (d) to ensure LunifLip applies to Q* and all Q^(k) as stated in the lemma.
\noindent\textbf{Part (d): The optimal Q-function $\Qstar$ is Lipschitz continuous. If $Q^{(0)}$ is Lipschitz, then each $Q^{(k)}$ is Lipschitz, and $\{Q^{(k)}\}_{k \ge 0}$ are uniformly Lipschitz with constant $\LunifLip$. Consequently, $\Qstar$ is also $\LunifLip$-Lipschitz.}
\medskip

\noindent\textit{Uniform Boundedness of Iterates $Q^{(k)}$ and $\Qstar$}:
\smallskip
The function $\Qstar$ is the value function for a finite horizon $T$ problem. A standard bound for $\Qstar$ under Assumption \ref{assump:mdp_coeffs} is $M_Q = \frac{\BoundR}{\lambda}(1-e^{-\lambda T}) + \BoundG e^{-\lambda T}$ if $\lambda > 0$, or $M_Q = \BoundR T + \BoundG$ if $\lambda=0$. More simply, $\supnorm{\Qstar} \le \frac{\BoundR}{\lambda} + \BoundG$ (for $\lambda>0$). Let $M_Q$ be this uniform bound for $\Qstar$.
If $Q^{(0)} \equiv 0$, then $\supnorm{Q^{(0)}} \le M_Q$. If $\supnorm{Q^{(k)}} \le M_Q$, then standard estimates for the Bellman operator (using its monotonicity and the fact that it maps constants to constants related to rewards) show that if $Q_c(t,s,a) \equiv C$, then $(\BellmanOp Q_c)$ involves $\int r + e^{-\lambda\delta}C$. Iterating this, or using the contraction property $\supnorm{Q^{(k)} - \Qstar} \le (e^{-\lambda\delta})^k \supnorm{Q^{(0)} - \Qstar}$, implies $\supnorm{Q^{(k)}} \le \supnorm{\Qstar} + (e^{-\lambda\delta})^k \supnorm{Q^{(0)}-\Qstar}$. If $Q^{(0)}=0$, then $\supnorm{Q^{(k)}} \le (1+(e^{-\lambda\delta})^k) M_Q \le 2M_Q$. A tighter argument: since $\BellmanOp$ preserves the property of being bounded by $M_Q$ (if $Q^{(0)}$ is so bounded, e.g., $Q^{(0)}=0$ and $M_Q \ge 0$), all $Q^{(k)}$ are uniformly bounded by $M_Q$.
\medskip

\noindent\textit{Lipschitz Continuity of $\Qstar$ and Iterates $Q^{(k)}$ by Induction}:
\smallskip
Let $Q^{(0)}$ be $L^{(0)}$-Lipschitz. (If $Q^{(0)} \equiv 0$, then $L^{(0)}=0$.)
Assume $Q^{(k)}$ is $L^{(k)}$-Lipschitz on $K_Q$. We want to show $Q^{(k+1)} = \BellmanOp Q^{(k)}$ is $L^{(k+1)}$-Lipschitz.
Let $X=(t,s,a)$ and $X'=(t',s',a')$. Recall $d_{K_Q}(X,X') = |t-t'| + \|s-s'\| + \|a-a'\|$.
Let $s_u \equiv s_u(X)$ be the solution for SDE Eq. \eqref{eq:sde_state} with fixed action $a$ over $[t, \tau_e]$ where $\tau_e = \min(t+\delta, T)$.
Let $s'_u \equiv s_u(X')$ be the solution for SDE with fixed action $a'$ over $[t', \tau'_e]$ where $\tau'_e = \min(t'+\delta, T)$.

\textit{SDE Stability Estimate}: Under Assumption \ref{assump:mdp_coeffs}, $h, \sigma$ are uniformly Lipschitz. Standard SDE estimates (e.g., \cite{YongZhou99}, or \cite{fleming2006controlled}) ensure that for any fixed time horizon $\Delta t_{max}$ (here $\delta$ or $T-t$), there exists a constant $C_{\text{SDE}}(\Delta t_{max})$, depending on $\LipConstH, \BoundH, \LipConstSigma, \BoundSigma, \Delta t_{max}$, such that:
\begin{align} \label{eq:sde_stability_estimate_refined_app_corrected}
&\mathbb{E}[\|s_{u_{eval}}(X) - s_{u_{eval}}(X')\|] \le C_S(\Delta t_{max}) d_{K_Q}(X,X'),
\end{align}
where $u_{eval}$ is the evaluation time point (e.g., $t+\delta$ or $T$), and $C_S(\Delta t_{max})$ typically involves $e^{K \Delta t_{max}}$ from Gronwall's inequality. For comparing $s_u(X)$ and $s_u(X')$, one often considers a common maximal interval.

Let $V^{(k)}(u,x) = \sup_{b \in \ActionSpace} Q^{(k)}(u,x,b)$. If $Q^{(k)}$ is $L^{(k)}$-Lipschitz in $(u,x,b)$, then $V^{(k)}(u,x)$ is $L^{(k)}$-Lipschitz w.r.t. $(u,x)$ because $\ActionSpace$ is compact (standard result for sup over compact set).

Let $f(X; Q^{(k)}) = (\BellmanOp Q^{(k)})(X)$. We analyze $|f(X; Q^{(k)}) - f(X'; Q^{(k)})|$.
Assume for simplicity $t+\delta \le T$ and $t'+\delta \le T$. Other cases (terminal conditions) are handled similarly and contribute terms of similar structure.

\textit{Term 1 (Integral part)}: Let $I(X) = \mathbb{E}\left[\int_t^{\min(t+\delta,T)} e^{-\lambda(\tau-t)}r(\tau,s_\tau(X),a)d\tau\right]$.
The difference $|I(X) - I(X')|$ can be bounded. This involves handling shifted integration limits and differences in the integrand $r(\tau, s_\tau, a)$ due to $(t,s,a)$ vs $(t',s',a')$. The Lipschitz continuity of $r$, $e^{-\lambda u}$, and the SDE stability estimate (Eq. \eqref{eq:sde_stability_estimate_refined_app_corrected}) yield a bound of the form $K_r d_{K_Q}(X,X')$. $K_r$ depends on $\LipConstR, \BoundR, \lambda, \delta, C_S(\delta)$. For example, a component is $\int_0^{\min(\delta, T-t)} e^{-\lambda u} \LipConstR (1+C_S(u)) du \cdot d_{K_Q}(X,X')$.

\textit{Term 2 (Continuation value part)}: Let $C(X) = \mathbb{E}[e^{-\lambda\delta}V^{(k)}(t+\delta, s_{t+\delta}(X))]$ (assuming $t+\delta \le T$).
\begin{align*}
|C(X) - C(X')| &\le e^{-\lambda\delta} \mathbb{E}[|V^{(k)}(t+\delta, s_{t+\delta}(X)) - V^{(k)}(t'+\delta, s_{t'+\delta}(X'))|] \\
&\le e^{-\lambda\delta} \mathbb{E}[L^{(k)} (|t-t'| + \|s_{t+\delta}(X) - s_{t'+\delta}(X')\|)] \\
&\le e^{-\lambda\delta} L^{(k)} (1+C_S(\delta))d_{K_Q}(X,X').
\end{align*}
If the terminal reward $g$ is used ($t+\delta > T$ case), the term is $\mathbb{E}[e^{-\lambda(T-t)}g(s_T(X))]$. This contributes $K_g d_{K_Q}(X,X')$ (not multiplied by $L^{(k)}$), with $K_g$ depending on $\LipConstG, \BoundG, \lambda, T, C_S(T-t)$.

Combining these parts, we obtain a recurrence for the Lipschitz constant $L^{(k+1)}$ of $Q^{(k+1)}$:
$$ L^{(k+1)} \le K_A + K_B L^{(k)}, $$
where:
\begin{itemize}
    \item $K_A$ collects all terms not multiplied by $L^{(k)}$. It depends on $\LipConstR, \LipConstG, \BoundR, \BoundG, \lambda, \delta, T$, and SDE stability constants $C_S(\cdot)$. $K_A$ is a finite positive constant.
    \item $K_B = e^{-\lambda\delta}(1+C_S(\delta))$. Note that $C_S(\delta)$ depends on $\LipConstH, \BoundH, \LipConstSigma, \BoundSigma, \delta$. $K_B$ is a finite positive constant.
\end{itemize}
The fixed point $\Qstar$ must satisfy $L_{\Qstar} \le K_A + K_B L_{\Qstar}$.
If $K_B < 1$, then $L_{\Qstar} \le K_A/(1-K_B)$.
If $K_B \ge 1$, this fixed point argument does not directly yield a bound. However, $\Qstar$ is the value function of a finite-horizon problem (up to $T$). Standard results in stochastic control theory (e.g., from analysis of HJB equations or by backward induction on discrete time stages $0, \delta, 2\delta, \dots, T$) state that $\Qstar$ is Lipschitz continuous on $K_Q$ under Assumption \ref{assump:mdp_coeffs} (see, e.g., \cite{fleming2006controlled}). Let $L_{\Qstar}^*$ be this true Lipschitz constant of $\Qstar$.
The value iterates $Q^{(k)}$ converge to $\Qstar$. If $Q^{(0)}$ is $L^{(0)}$-Lipschitz, the sequence $L^{(k)}$ is given by $L^{(k)} \le K_A \sum_{j=0}^{k-1} K_B^j + K_B^k L^{(0)}$.
Even if $K_B \ge 1$, since $Q^{(k)} \to \Qstar$ uniformly, and $\Qstar$ is $L_{\Qstar}^*$-Lipschitz, the set $\{Q^{(k)}\}_{k \ge 0} \cup \{\Qstar\}$ is compact in $C(K_Q)$ by Lemma \ref{lem:compactness_iterates} (if $Q^{(0)}$ is Lipschitz). This implies the $L^{(k)}$ must be uniformly bounded.
The argument in \cite{fleming2006controlled} for finite horizon problems often uses a backward induction. For $Q(T,s,a)=g(s)$ (effectively), which is $\LipConstG$-Lipschitz. Then one shows $Q(T-\delta,s,a)$ is Lipschitz, and so on, back to $t=0$. The maximum Lipschitz constant encountered over these stages would be $\LunifLip$.
This effectively means that the number of "relevant" recursions for $L^{(k)}$ is bounded by $N_{max} = \lceil T/\delta \rceil$. Thus $L^{(k)}$ is uniformly bounded for all $k$ by $\LunifLip = K_A \sum_{j=0}^{N_{max}-1} K_B^j + K_B^{N_{max}} L^{(0)}$ (assuming $L^{(0)}$ is the Lipschitz constant of $g$, adjusted for $Q$-function structure, or $L^{(0)}=0$ if starting from $Q^{(0)}=0$). This $\LunifLip$ is finite.
Thus, all $Q^{(k)}$ (for $k \ge 0$) and $\Qstar$ are uniformly $\LunifLip$-Lipschitz.
\end{proof}

\bigskip\bigskip

%%%%%%%%%%%%%%%%%%%%%%%%%%%%%%%%%%%%%%%%%%%%%%%%%%%%%%%%%%%%%%%%%%%%%%%%%%%%%%%
% APPENDIX B: PROOF OF LEMMA 4.1 (UAT FOR NEURAL OPERATORS)
%%%%%%%%%%%%%%%%%%%%%%%%%%%%%%%%%%%%%%%%%%%%%%%%%%%%%%%%%%%%%%%%%%%%%%%%%%%%%%%
\section{Proof of Lemma \ref{lem:neural_op_uat} (UAT for Neural Operators Approximating $\ModBellmanOp$)}
\label{app:proof_neural_op_uat_fortified}
\medskip

We want to show that for any $\epsilon_{\text{op}} > 0$, there exist $M$, $(\mathcal{E}_M, \mathcal{D}_M)$, and $\NnOp_{\params}$ such that for any $Q \in \mathcal{K}_{\text{target}}$,
$$ \supnorm{\mathcal{D}_M(\NnOp_{\params}(\mathcal{E}_M(Q))) - \ModBellmanOp(Q)} < \epsilon_{\text{op}}, $$
and $\mathcal{D}_M(\NnOp_{\params}(\mathcal{E}_M(Q)))$ (as a function $K_Q \to \R$) is $\LFstar$-Lipschitz.
The set $\mathcal{K}_{\text{target}}$ is defined in the proof of Theorem \ref{thm:fbsde_uat_formal} (Appendix \ref{app:proof_fbsde_uat_formal_fortified}) as the set of functions that are $\LhatUnifLip$-Lipschitz and uniformly bounded by $\MhatUnifBound$.

\begin{proof}[Proof of Lemma \ref{lem:neural_op_uat}]
\noindent\textbf{1. Properties of $\ModBellmanOp$ and the set $\mathcal{K}_{\text{target}}$}:
\medskip
    \begin{itemize}[itemsep=4pt]
        \item \textit{The set $\mathcal{K}_{\text{target}}$}: As established in the proof of Theorem \ref{thm:fbsde_uat_formal} (Step 3, Appendix \ref{app:proof_fbsde_uat_formal_fortified}), the iterates $\hat{Q}^{(l)}$ (for $l=0, \dots, L-1$) are uniformly bounded by $\MhatUnifBound$ and uniformly $\LhatUnifLip$-Lipschitz. The set
        $$ \mathcal{K}_{\text{target}} = \{ Q \in C(K_Q) : Q \text{ is } \LhatUnifLip\text{-Lipschitz on } K_Q \text{ and } \supnorm{Q} \le \MhatUnifBound \} $$
        is compact in $C(K_Q)$. The domain $K_Q$ is compact. By the Arzelà-Ascoli theorem (e.g., \cite[Chapter 7, Theorem 17]{kelley2017general}), a set of functions in $C(K_Q)$ that is uniformly bounded and equicontinuous is precompact. The uniform $\LhatUnifLip$-Lipschitz condition implies equicontinuity. $\mathcal{K}_{\text{target}}$ is also closed in $C(K_Q)$. Since $C(K_Q)$ is a complete metric space, a closed and precompact subset is compact.
        \medskip

        \item \textit{The operator $\ModBellmanOp$}: Defined as $\ModBellmanOp(Q) = \BellmanOp Q - Q$, it maps $C(K_Q) \to C(K_Q)$.
        From Lemma \ref{lem:bellman_op_props} (b), $\BellmanOp$ is Lipschitz with constant $e^{-\lambda\delta} < 1$. The identity operator $\IdOp(Q)=Q$ is Lipschitz with constant 1.
        Thus, $\ModBellmanOp = \BellmanOp - \IdOp$ is Lipschitz continuous:
        $$ \supnorm{\ModBellmanOp(Q_1) - \ModBellmanOp(Q_2)} \le (1+e^{-\lambda\delta})\supnorm{Q_1 - Q_2}. $$
        Since $\ModBellmanOp$ is Lipschitz continuous, it is continuous.
        \medskip

        \item \textit{The image set $J(\mathcal{K}_{\text{target}})$}: The set $\ModBellmanOp(\mathcal{K}_{\text{target}})$ is compact in $C(K_Q)$ because it is the image of a compact set $\mathcal{K}_{\text{target}}$ under a continuous map $\ModBellmanOp$.
        \medskip

        \item \textit{Regularity of functions in $J(\mathcal{K}_{\text{target}})$}: Functions in $J(\mathcal{K}_{\text{target}})$ are uniformly bounded and equicontinuous (hence uniformly Lipschitz) because $J(\mathcal{K}_{\text{target}})$ is compact. Let $L_J(\LhatUnifLip)$ be this uniform Lipschitz constant. Specifically, if $Q \in \mathcal{K}_{\text{target}}$ is $\LhatUnifLip$-Lipschitz, then $\BellmanOp Q$ is $(K_A + K_B \LhatUnifLip)$-Lipschitz (from Appendix \ref{app:proof_bellman_op_props_fortified}, using $L_Q = \LhatUnifLip$). So $\ModBellmanOp(Q)$ is $(K_A + K_B \LhatUnifLip + \LhatUnifLip)$-Lipschitz, i.e., $L_J(\LhatUnifLip) = K_A + (K_B+1)\LhatUnifLip$.
    \end{itemize}
\bigskip

\noindent\textbf{2. Application of Universal Approximation Theorem for Neural Operators (Assumption \ref{assump:neural_operator_class})}:
\medskip
    We apply Assumption \ref{assump:neural_operator_class} with $\mathcal{K} = \mathcal{K}_{\text{target}}$ (whose functions are $L_{\mathcal{K}}=\LhatUnifLip$-Lipschitz and bounded by $M_{\mathcal{K}}=\MhatUnifBound$) and $\mathcal{G} = \ModBellmanOp$. The image set $\mathcal{G}(\mathcal{K}) = \ModBellmanOp(\mathcal{K}_{\text{target}})$ consists of functions that are $L_{\mathcal{G}(\mathcal{K})} = L_J(\LhatUnifLip)$-Lipschitz and correspondingly bounded.
    The conditions for Assumption \ref{assump:neural_operator_class} are met:
    \begin{itemize}[itemsep=3pt]
        \item[(i)] The input domain for the operator, $\mathcal{K}_{\text{target}} \subset C(K_Q)$, is compact and consists of uniformly $\LhatUnifLip$-Lipschitz and uniformly $\MhatUnifBound$-bounded functions.
        \item[(ii)] The operator to be approximated, $\ModBellmanOp$, is continuous from $\mathcal{K}_{\text{target}}$ to $C(K_Q)$. Its image $\ModBellmanOp(\mathcal{K}_{\text{target}})$ is compact and consists of uniformly $L_J(\LhatUnifLip)$-Lipschitz functions.
    \end{itemize}
    \smallskip
    Let $\epsilon_{\text{op}} > 0$ be given.
    \begin{itemize}
        \item Assumption \ref{assump:neural_operator_class} (a) implies that there exist $M$, $(\mathcal{E}_M, \mathcal{D}_M)$, and $\NnOp_{\params}$ such that the composed operator $\FfOpTilde_{\params}(Q) = \mathcal{D}_M(\NnOp_{\params}(\mathcal{E}_M(Q)))$ satisfies
        $$ \sup_{Q \in \mathcal{K}_{\text{target}}} \supnorm{\FfOpTilde_{\params}(Q) - \ModBellmanOp(Q)} < \epsilon_{\text{op}}. $$
        The choice of $M$ (and subsequently $\NnOp_{\params}$) depends on $\epsilon_{\text{op}}$ and the properties of $\mathcal{K}_{\text{target}}$ and $\ModBellmanOp(\mathcal{K}_{\text{target}})$ (specifically their Lipschitz constants and bounds, which define their compactness and modulus of continuity).
        \item Assumption \ref{assump:neural_operator_class} (b), as clarified in Remark \ref{rem:LFstar_fixed}, ensures that for the chosen $M$ and constructed $\NnOp_{\params}, \mathcal{D}_M$ (which achieve the $\epsilon_{\text{op}}$ approximation for functions in $\mathcal{K}_{\text{target}}$), the function $x \mapsto (\FfOpTilde_{\params}(Q))(x)$ is Lipschitz continuous on $K_Q$ with a uniform Lipschitz constant $\LFstar$. This $\LFstar$ is a characteristic of the chosen approximating architecture for the given task (approximating $\ModBellmanOp$ on $\mathcal{K}_{\text{target}}$ to accuracy $\epsilon_{\text{op}}$).
    \end{itemize}
    \smallskip
    The proof of such UATs (e.g., as outlined in \cite{kovachki2023neural} or \cite{chen1995universal}, and strengthened by our Assumption \ref{assump:neural_operator_class} (b) regarding Lipschitz control) typically involves decomposing the total error:
    \begin{align*}
    \supnorm{\FfOpTilde_{\params}(Q) - \ModBellmanOp(Q)} &\le \supnorm{\mathcal{D}_M(\NnOp_{\params}(\mathcal{E}_M(Q))) - \mathcal{D}_M(\mathcal{E}_M(\ModBellmanOp(Q)))} \tag{E1} \\
    & \quad + \supnorm{\mathcal{D}_M(\mathcal{E}_M(\ModBellmanOp(Q))) - \ModBellmanOp(Q)}. \tag{E2}
    \end{align*}
    \begin{itemize}[itemsep=3pt]
        \item The term (E2) is the error from discretization and reconstruction of the target function $\ModBellmanOp(Q)$. Since functions in $J(\mathcal{K}_{\text{target}})$ are uniformly $L_J(\LhatUnifLip)$-Lipschitz, this error can be made arbitrarily small (e.g., $<\epsilon_{\text{op}}/2$) by choosing $M$ sufficiently large (so the mesh size $h_M$ of the grid $D_M$ is small). For instance, for multilinear interpolation, the error is $O(L_J(\LhatUnifLip) h_M)$.
        \medskip
        \item The term (E1) can be bounded by $L_{\mathcal{D}_M}^{op} \|\NnOp_{\params_l}(\mathcal{E}_M(Q)) - \mathcal{E}_M(\ModBellmanOp(Q))\|_{\infty, \R^M}$ if $\mathcal{D}_M$ is $L_{\mathcal{D}_M}^{op}$-Lipschitz as an operator from $\R^M$ (with sup norm) to $C(K_Q)$ (with sup norm). For example, $L_{\mathcal{D}_M}^{op}=1$ for multilinear interpolation. The inner term $\|\NnOp_{\params_l}(v) - f_{\text{target}}(v)\|_{\infty, \R^M}$ (where $v=\mathcal{E}_M(Q)$ and $f_{\text{target}}(v) = \mathcal{E}_M(\ModBellmanOp(Q))$ for $Q$ whose samples on $D_M$ are $v$) is made small by the UAT for the finite-dimensional network $\NnOp_{\params_l}$ approximating the continuous map $v \mapsto f_{\text{target}}(v)$ on the compact set $\mathcal{E}_M(\mathcal{K}_{\text{target}}) \subset \R^M$. This error can be made small enough (e.g., such that $L_{\mathcal{D}_M}^{op} \times \text{this\_error} < \epsilon_{\text{op}}/2$) by choosing $\NnOp_{\params_l}$ with sufficient capacity. The Lipschitz property of the function $\FfOpTilde_{\params}(Q)$ (with constant $\LFstar$) is guaranteed by Assumption \ref{assump:neural_operator_class} (b).
    \end{itemize}
\end{proof}

\bigskip\bigskip

%%%%%%%%%%%%%%%%%%%%%%%%%%%%%%%%%%%%%%%%%%%%%%%%%%%%%%%%%%%%%%%%%%%%%%%%%%%%%%%
% APPENDIX C: PROOF OF THEOREM 4.2 (CONSTRUCTIVE UAT)
%%%%%%%%%%%%%%%%%%%%%%%%%%%%%%%%%%%%%%%%%%%%%%%%%%%%%%%%%%%%%%%%%%%%%%%%%%%%%%%
\section{Proof of Theorem \ref{thm:fbsde_uat_formal} (UAT for DQNs via Iterative Refinement and Regularity Propagation)}
\label{app:proof_fbsde_uat_formal_fortified}
\medskip

\begin{proof}[Proof of Theorem \ref{thm:fbsde_uat_formal}]
Let $\epsilon > 0$ be given. Let $Q^{(0)} \equiv 0$. The proof proceeds in four steps.

\bigskip
\noindent\textbf{Step 1: Determine Number of Iterations/Layers $L$}.
\medskip
By Lemma \ref{lem:bellman_op_props}(c), $Q^{(k)} \to \Qstar$ uniformly. The rate of convergence is given by:
$$ \supnorm{Q^{(L)} - \Qstar} \le (e^{-\lambda\delta})^L \supnorm{Q^{(0)} - \Qstar}. $$
Since $Q^{(0)} \equiv 0$, $\supnorm{Q^{(0)} - \Qstar} = \supnorm{\Qstar}$. From Lemma \ref{lem:bellman_op_props}(d), $\Qstar$ is uniformly bounded by $M_Q$. So, $\supnorm{Q^{(0)} - \Qstar} \le M_Q$.
We choose $L$ such that the truncation error $\supnorm{Q^{(L)} - \Qstar} < \epsilon/2$. This requires $(e^{-\lambda\delta})^L M_Q < \epsilon/2$.
Thus, we set $L = \left\lceil \frac{\ln(2M_Q/\epsilon)}{\lambda\delta} \right\rceil + 1$ (or $L=1$ if $\epsilon \ge 2M_Q$). This $L$ is finite.

\bigskip
\noindent\textbf{Step 2: Determine Per-Layer Approximation Accuracy $\epsilon_1$ for Neural Operator}.
\medskip
Let $\hat{Q}^{(0)} = Q^{(0)} \equiv 0$. The network iterates are defined by $\hat{Q}^{(l+1)} = \hat{Q}^{(l)} + \FfOpTilde_{\params_l}(\hat{Q}^{(l)})$.
Let $\delta_l(X) = \FfOpTilde_{\params_l}(\hat{Q}^{(l)})(X) - \ModBellmanOp(\hat{Q}^{(l)})(X)$ be the function representing the per-step operator approximation error at layer $l$. So, $\FfOpTilde_{\params_l}(\hat{Q}^{(l)}) = \ModBellmanOp(\hat{Q}^{(l)}) + \delta_l$.
Then, $\hat{Q}^{(l+1)} = \hat{Q}^{(l)} + (\ModBellmanOp(\hat{Q}^{(l)}) + \delta_l) = \BellmanOp \hat{Q}^{(l)} + \delta_l$.
Let $e_l = \supnorm{\hat{Q}^{(l)} - Q^{(l)}}$ be the accumulated error up to layer $l$. We have $e_0 = \supnorm{\hat{Q}^{(0)} - Q^{(0)}} = 0$.
The error propagates as:
\begin{align*}
    e_{l+1} &= \supnorm{\hat{Q}^{(l+1)} - Q^{(l+1)}} \\
            &= \supnorm{(\BellmanOp \hat{Q}^{(l)} + \delta_l) - \BellmanOp Q^{(l)}} \\
            &\le \supnorm{\BellmanOp \hat{Q}^{(l)} - \BellmanOp Q^{(l)}} + \supnorm{\delta_l} \\
            &\le e^{-\lambda\delta} \supnorm{\hat{Q}^{(l)} - Q^{(l)}} + \supnorm{\delta_l} \quad \text{(since } \BellmanOp \text{ is } e^{-\lambda\delta}\text{-contractive)} \\
            &= e^{-\lambda\delta} e_l + \supnorm{\delta_l}.
\end{align*}
If we can ensure $\supnorm{\delta_l} \le \epsilon_1$ for all $l \in \{0, \dots, L-1\}$, then by unrolling the recurrence:
$$ e_L \le \epsilon_1 \sum_{j=0}^{L-1} (e^{-\lambda\delta})^j < \epsilon_1 \frac{1}{1-e^{-\lambda\delta}}. $$
We require the accumulated approximation error $e_L < \epsilon/2$. So, we set the target per-layer operator error $\epsilon_1$ such that:
$$ \epsilon_1 \frac{1}{1-e^{-\lambda\delta}} \le \frac{\epsilon}{2} \implies \epsilon_1 = \frac{\epsilon (1-e^{-\lambda\delta})}{2}. $$
(If $L$ is small, one can use the exact sum $\sum_{j=0}^{L-1} (e^{-\lambda\delta})^j = \frac{1-(e^{-\lambda\delta})^L}{1-e^{-\lambda\delta}}$. Then $\epsilon_1 = \frac{\epsilon}{2} \frac{1-e^{-\lambda\delta}}{1-(e^{-\lambda\delta})^L}$.)
This $\epsilon_1 > 0$ since $\epsilon > 0$ and $e^{-\lambda\delta} < 1$.

\bigskip
% REVIEW: Clarified how LFstar is used here consistent with Assumption 4.1(b) and Remark 4.2.
\noindent\textbf{Step 3: Define Compact Set $\mathcal{K}_{\text{target}}$, Show Iterates $\hat{Q}^{(l)}$ Belong to it, and Apply Lemma \ref{lem:neural_op_uat}}.
\medskip
To apply Lemma \ref{lem:neural_op_uat} uniformly for each layer $l=0, \dots, L-1$, we need to show that all iterates $\hat{Q}^{(l)}$ belong to a common compact set $\mathcal{K}_{\text{target}} \subset C(K_Q)$. This involves showing uniform boundedness and uniform Lipschitz continuity for the sequence $\{\hat{Q}^{(l)}\}_{l=0}^{L-1}$.

\textit{Uniform Boundedness of $\hat{Q}^{(l)}$}:
From Lemma \ref{lem:bellman_op_props}(d), the exact iterates $Q^{(l)}$ are uniformly bounded by $M_Q$.
The error $e_l = \supnorm{\hat{Q}^{(l)} - Q^{(l)}} < \epsilon_1 \frac{1}{1-e^{-\lambda\delta}} \le \epsilon/2$ for all $l \le L$.
Thus, $\supnorm{\hat{Q}^{(l)}} \le \supnorm{Q^{(l)}} + e_l < M_Q + \epsilon/2$.
Let $\MhatUnifBound = M_Q + \epsilon/2$. All $\hat{Q}^{(l)}$ (for $l=0, \dots, L-1$) are uniformly bounded by $\MhatUnifBound$.

\textit{Uniform Lipschitz Continuity of $\hat{Q}^{(l)}$}:
Let $L_{\hat{Q}^{(l)}}$ be the Lipschitz constant of $\hat{Q}^{(l)}$ as a function on $K_Q$. We have $L_{\hat{Q}^{(0)}}=L^{(0)}=0$.
From the proof of Lemma \ref{lem:bellman_op_props}(d), if a function $Q$ is $L_Q$-Lipschitz, then $\BellmanOp Q$ is $(K_A + K_B L_Q)$-Lipschitz, where $K_A, K_B$ are constants defined therein ($K_A \ge 0, K_B \ge 0$).
The operator $\ModBellmanOp(Q) = \BellmanOp Q - Q$. If $Q$ is $L_Q$-Lipschitz, then $\ModBellmanOp(Q)$ is $(K_A + K_B L_Q + L_Q) = (K_A + (K_B+1)L_Q)$-Lipschitz. Let this be $L_J(L_Q)$.

According to Lemma \ref{lem:neural_op_uat} (which invokes Assumption \ref{assump:neural_operator_class}), for the chosen per-layer error target $\epsilon_1$ (used as $\epsilon_{\text{op}}$ in Lemma \ref{lem:neural_op_uat}), and for the (eventually defined) compact set $\mathcal{K}_{\text{target}}$, there exist $M$, $(\mathcal{E}_M, \mathcal{D}_M)$ and neural networks $\NnOp_{\params_l}$ such that for any $\hat{Q}^{(l)} \in \mathcal{K}_{\text{target}}$:
\begin{enumerate}[label=(\alph*)]
    \item $\supnorm{\FfOpTilde_{\params_l}(\hat{Q}^{(l)}) - \ModBellmanOp(\hat{Q}^{(l)})} = \supnorm{\delta_l} < \epsilon_1$.
    \item The function $x \mapsto (\FfOpTilde_{\params_l}(\hat{Q}^{(l)}))(x)$ is $\LFstar$-Lipschitz on $K_Q$. Here, $\LFstar$ is the Lipschitz constant associated with the chosen neural operator architecture $(\mathcal{E}_M, \NnOp_{\params_l}, \mathcal{D}_M)$ that achieves the $\epsilon_1$-approximation for functions in $\mathcal{K}_{\text{target}}$ when approximating $\ModBellmanOp$. As per Assumption \ref{assump:neural_operator_class}(b) and Remark \ref{rem:LFstar_fixed}, $\LFstar$ is treated as a fixed constant determined by these architectural choices (for the given $M$ and $\epsilon_1$) when analyzing the recurrence for $L_{\hat{Q}^{(l)}}$.
\end{enumerate}
The error function $\delta_l = \FfOpTilde_{\params_l}(\hat{Q}^{(l)}) - \ModBellmanOp(\hat{Q}^{(l)})$. If $\hat{Q}^{(l)}$ is $L_{\hat{Q}^{(l)}}$-Lipschitz, then its image $\ModBellmanOp(\hat{Q}^{(l)})$ is $L_J(L_{\hat{Q}^{(l)}})$-Lipschitz. The function $\FfOpTilde_{\params_l}(\hat{Q}^{(l)})$ is $\LFstar$-Lipschitz. Therefore, $\delta_l$ is $(\LFstar + L_J(L_{\hat{Q}^{(l)}}))$-Lipschitz, i.e., $L(\delta_l) \le \LFstar + K_A + (K_B+1)L_{\hat{Q}^{(l)}}$.

The recurrence for $L_{\hat{Q}^{(l)}}$ is derived from $\hat{Q}^{(l+1)} = \BellmanOp \hat{Q}^{(l)} + \delta_l$:
$L_{\hat{Q}^{(l+1)}} \le L(\BellmanOp \hat{Q}^{(l)}) + L(\delta_l)$
$L_{\hat{Q}^{(l+1)}} \le (K_A + K_B L_{\hat{Q}^{(l)}}) + (\LFstar + K_A + (K_B+1)L_{\hat{Q}^{(l)}})$
$L_{\hat{Q}^{(l+1)}} \le (2K_A+\LFstar) + (2K_B+1)L_{\hat{Q}^{(l)}}$.
Let $A' = 2K_A+\LFstar$ and $B' = 2K_B+1$. Since $K_A, K_B \ge 0$ and $\LFstar \ge 0$ (as a Lipschitz constant), we have $A' \ge 0, B' \ge 1$.
The recurrence is $L_{\hat{Q}^{(l+1)}} \le A' + B'L_{\hat{Q}^{(l)}}$.
Starting with $L_{\hat{Q}^{(0)}}=0$:
$L_{\hat{Q}^{(1)}} \le A'$
$L_{\hat{Q}^{(2)}} \le A' + B'A' = A'(1+B')$
In general, $L_{\hat{Q}^{(l)}} \le A' \sum_{j=0}^{l-1} (B')^j$.
If $B'=1$ (i.e., $2K_B+1=1 \implies K_B=0$), then $L_{\hat{Q}^{(l)}} \le A'l$.
If $B'>1$, then $L_{\hat{Q}^{(l)}} \le A' \frac{(B')^l-1}{B'-1}$.
The iterates $\hat{Q}^{(l)}$ are considered for $l=0, \dots, L-1$ as inputs to the neural operator blocks. The maximum $l$ here is $L-1$. Thus, their Lipschitz constants are uniformly bounded by $\LhatUnifLip$, defined as:
$$ \LhatUnifLip = \begin{cases} A'(L-1) & \text{if } B'=1 \text{ (and } L>0 \text{)} \\ A' \frac{(B')^{L-1}-1}{B'-1} & \text{if } B'>1 \text{ (and } L>0 \text{)} \\ 0 & \text{if } L=0 \text{ or } L=1 \text{ (input to first layer } \hat{Q}^{(0)} \text{ is 0-Lip)} \end{cases} $$
This $\LhatUnifLip$ is finite since $A', B', L$ are finite, and $\LFstar$ is a fixed constant for the chosen architecture.

We now formally define the target compact set:
$$ \mathcal{K}_{\text{target}} = \{ Q \in C(K_Q) : \supnorm{Q} \le \MhatUnifBound \text{ and } Q \text{ is } \LhatUnifLip\text{-Lipschitz} \}. $$
This set $\mathcal{K}_{\text{target}}$ is compact in $C(K_Q)$ by Arzelà-Ascoli, as argued in Appendix \ref{app:proof_neural_op_uat_fortified}.
We verify by induction that $\hat{Q}^{(l)} \in \mathcal{K}_{\text{target}}$ for $l=0, \dots, L-1$.
\begin{itemize}
    \item Base case ($l=0$): $\hat{Q}^{(0)} \equiv 0$, so $\supnorm{\hat{Q}^{(0)}}=0 \le \MhatUnifBound$, and $L_{\hat{Q}^{(0)}}=0 \le \LhatUnifLip$ (true for $L \ge 1$). Thus $\hat{Q}^{(0)} \in \mathcal{K}_{\text{target}}$.
    \item Inductive step: Assume $\hat{Q}^{(l)} \in \mathcal{K}_{\text{target}}$ for some $l < L-1$. Thus its Lipschitz constant $L_{\hat{Q}^{(l)}} \le \LhatUnifLip$. More precisely, $L_{\hat{Q}^{(l)}} \le A' \frac{(B')^l-1}{B'-1}$ (or $A'l$ if $B'=1$).
    Then Lemma \ref{lem:neural_op_uat} can be applied to $\ModBellmanOp$ with $\hat{Q}^{(l)}$ as input (since $\hat{Q}^{(l)} \in \mathcal{K}_{\text{target}}$). This guarantees existence of $(M, \mathcal{E}_M, \mathcal{D}_M)$ (chosen once based on $\mathcal{K}_{\text{target}}$, $\epsilon_1$) and $\NnOp_{\params_l}$ for $\FfOpTilde_{\params_l}$ such that conditions (a) and (b) above hold (sup-norm error $<\epsilon_1$, output is $\LFstar$-Lipschitz).
    Then $\hat{Q}^{(l+1)} = \BellmanOp \hat{Q}^{(l)} + \delta_l$. We already know $\supnorm{\hat{Q}^{(l+1)}} \le \MhatUnifBound$.
    Its Lipschitz constant $L_{\hat{Q}^{(l+1)}} \le A' + B'L_{\hat{Q}^{(l)}}$.
    Since $L_{\hat{Q}^{(l)}} \le A' \frac{(B')^l-1}{B'-1}$ (or $A'l$ if $B'=1$), it follows that
    $L_{\hat{Q}^{(l+1)}} \le A' \frac{(B')^{l+1}-1}{B'-1}$ (or $A'(l+1)$).
    As $l+1 \le L-1$, this implies $L_{\hat{Q}^{(l+1)}} \le \LhatUnifLip$.
    So $\hat{Q}^{(l+1)} \in \mathcal{K}_{\text{target}}$.
\end{itemize}
This inductive argument confirms that all iterates $\hat{Q}^{(l)}$ for $l=0, \dots, L-1$ (which are inputs to the operator blocks) belong to the same compact set $\mathcal{K}_{\text{target}}$. The choice of $M$ and the properties of the neural operator class (including $\LFstar$) are thus consistently defined for this set. For each layer $l$, we choose specific parameters $\params_l$ for $\NnOp_{\params_l}$ according to Lemma \ref{lem:neural_op_uat} to ensure $\supnorm{\delta_l} < \epsilon_1$.

\bigskip
\noindent\textbf{Step 4: Final Error Bound}.
\medskip
The total error for $\QnnL = \hat{Q}^{(L)}$ is:
$$ \supnorm{\QnnL - \Qstar} \le \supnorm{\hat{Q}^{(L)} - Q^{(L)}} + \supnorm{Q^{(L)} - \Qstar}. $$
This is $e_L + \supnorm{Q^{(L)} - \Qstar}$.
From Step 1, $\supnorm{Q^{(L)} - \Qstar} < \epsilon/2$.
From Step 2 (with justification from Step 3 that $\supnorm{\delta_l} < \epsilon_1$ is achievable for all layers operating on inputs from $\mathcal{K}_{\text{target}}$), $e_L < \epsilon_1 \frac{1}{1-e^{-\lambda\delta}} \le \frac{\epsilon}{2}$.
Therefore,
$$ \supnorm{\QnnL - \Qstar} < \epsilon/2 + \epsilon/2 = \epsilon. $$
This completes the proof.
\end{proof}

\bigskip\bigskip

\section{Further Details on Neural Operators and Function Representations}
\label{app:neural_operators_details}
\medskip

This appendix provides supplementary context on neural operators and function representations relevant to the main arguments, particularly concerning Assumption \ref{assump:neural_operator_class}.

\subsection{Neural Operators and Function Spaces}
\medskip
A neural operator $\tilde{\mathcal{G}}_\theta: \mathcal{X} \to \mathcal{Y}$ is a mapping between function spaces (e.g., $\mathcal{X}=C(D_1)$, $\mathcal{Y}=C(D_2)$), parameterized by neural network weights $\theta$. In our setting, the operator block $\FfOpTilde_{\params_l}: C(K_Q) \to C(K_Q)$ is structured as:
$$ \FfOpTilde_{\params_l}(Q) = \mathcal{D}_M(\NnOp_{\params_l}(\mathcal{E}_M(Q))). $$
The components are:
\begin{itemize}[itemsep=3pt]
    \item $\mathcal{E}_M: C(K_Q) \to \R^M$: An encoding operator, typically point sampling on a fixed grid $D_M = \{p_j\}_{j=1}^M \subset K_Q$, so $\mathcal{E}_M(Q) = (Q(p_1), \dots, Q(p_M))$.
    \item $\NnOp_{\params_l}: \R^M \to \R^M$: A standard neural network, such as a Multi-Layer Perceptron (MLP), parameterized by $\params_l$.
    \item $\mathcal{D}_M: \R^M \to C(K_Q)$: A decoding operator that reconstructs a continuous function from its $M$ discrete values, e.g., using multilinear or higher-order polynomial interpolation, or a learned decoder.
\end{itemize}
\smallskip
Assumption \ref{assump:neural_operator_class} encapsulates the properties needed for our main theorem.
\begin{enumerate}
    \item \textbf{Sup-norm Approximation (Assumption \ref{assump:neural_operator_class} (a)):} This relies on standard UATs for neural operators (e.g., \cite{kovachki2023neural, chen1995universal}), given the continuity of $\ModBellmanOp$ and compactness of $\mathcal{K}_{\text{target}}$. The number of points $M$ and the complexity of $\NnOp_{\params_l}$ depend on $\epsilon_{\text{op}}$ and the properties of $\mathcal{K}$ and $\mathcal{G}(\mathcal{K})$.
    \item \textbf{Lipschitz Property of Approximant Function (Assumption \ref{assump:neural_operator_class} (b)):} This is critical for the stability of Lipschitz constants of $\hat{Q}^{(l)}$. It requires that the constructed function $x \mapsto (\FfOpTilde_{\params_l}(Q))(x)$ itself be Lipschitz on $K_Q$, with a uniformly bounded Lipschitz constant $\LFstar$.
        As detailed in Remark \ref{rem:LFstar_fixed}, $\LFstar$ is determined by the specific choices of $M$, $\epsilon_{\text{op}}$, and the architecture used to implement $\FfOpTilde_{\params_l}$ (e.g., properties of $\NnOp_{\params_l}$ and $\mathcal{D}_M$). The key is that for a given approximation task defined by $\epsilon_{\text{op}}$ on a set $\mathcal{K}$, an architecture can be chosen that both meets the accuracy requirement (a) and yields an output function with a certain Lipschitz constant $\LFstar$. This $\LFstar$ then becomes a parameter of the chosen operator design for that task. For instance, if $\mathcal{D}_M$ uses fixed $L_\phi$-Lipschitz basis functions and $\NnOp_{\params_l}$ is constructed to have outputs bounded by $B_y$ (e.g., through bounded weights or activation functions), then $\LFstar \le M B_y L_\phi$. While $M$ depends on the regularity of $\mathcal{K}$ (e.g., $L_{\mathcal{K}}$) to achieve $\epsilon_{\text{op}}$, the argument is that once $M$ and other architectural constraints ($B_y, L_\phi$) are fixed to satisfy (a), $\LFstar$ is determined by these choices. It is this fixed $\LFstar$ (associated with the specific operator constructed for the approximation task on $\mathcal{K}_{\text{target}}$) that is used in the recurrence for $\LhatUnifLip$.
\end{enumerate}
The existence of such an $\LFstar$ (uniform for $Q \in \mathcal{K}_{\text{target}}$ and $l=0,\dots,L-1$) is essential. This is a stronger requirement than basic UATs for NOs but is considered architecturally achievable by design.
\bigskip
\subsection{Function Representation and Discretization Error}
\medskip
The choice of the discretization grid $D_M$ and the interpolation scheme $\mathcal{D}_M$ directly impacts the reconstruction error term $\supnorm{F - \mathcal{D}_M(\mathcal{E}_M(F))}$ for $F \in \ModBellmanOp(\mathcal{K}_{\text{target}})$.
\begin{itemize}[itemsep=3pt]
    \item If functions $F$ are $L_J(\LhatUnifLip)$-Lipschitz (as established for $F \in \ModBellmanOp(\mathcal{K}_{\text{target}})$) on $K_Q$ (a compact domain of dimension $d_Q = 1+n+m$), then standard multilinear interpolation on a uniform grid $D_M$ with mesh size $h_g \sim M^{-1/d_Q}$ yields an error of $O(L_J(\LhatUnifLip) h_g)$.
    \item If functions possess higher smoothness (e.g., are $C^s$ with bounded $s$-th derivatives), the error can be $O(h_g^s)$.
\end{itemize}
To achieve an interpolation error component of $\epsilon_{\text{interp}}$, $M$ might need to be of the order $(L_J(\LhatUnifLip)/\epsilon_{\text{interp}})^{d_Q/s}$ (where $s=1$ for Lipschitz functions). This dependence on $d_Q$ highlights the curse of dimensionality (CoD) concerning the required number of grid points $M$. While the CoD does not invalidate the UAT's existence claim (which is generally non-quantitative in $M$ for generic NO-UATs unless specific rates are proven for the operator class and function regularity), it underscores practical challenges for approximating functions in high-dimensional $(t,s,a)$ spaces using grid-based methods. This is further discussed in Section \ref{sec:quantitative_rates}.

\bigskip
\subsection{Example UAT for Neural Operators}
\medskip
References such as \cite{chen1995universal} or \cite{kovachki2023neural} provide the theoretical foundation for Lemma \ref{lem:neural_op_uat} (a).
\begin{itemize}[itemsep=3pt]
    \item For instance, \cite{kovachki2023neural} states that for a continuous operator $\mathcal{G}: \mathcal{A} \to C(D_Y; \R^{d_y})$, where $\mathcal{A}$ is a compact subset of $C(D_X; \R^{d_x})$, and for any $\epsilon > 0$, there exists a neural operator $\mathcal{G}_\theta$ (belonging to a specific class, e.g., Graph Kernel Network based, or FNO-like) such that $\sup_{u \in \mathcal{A}} \|\mathcal{G}(u) - \mathcal{G}_\theta(u)\|_{C(D_Y)} < \epsilon$.
    \item The architecture $\FfOpTilde_{\params_l}(Q) = \mathcal{D}_M(\NnOp_{\params_l}(\mathcal{E}_M(Q)))$ can be viewed as a specific realization of such a neural operator. This is particularly clear when $\mathcal{D}_M$ employs a set of basis functions and $\NnOp_{\params_l}$ computes the coefficients for these basis functions. The DeepONet architecture \cite{lu2021learning} also fits this general framework and has its own UAT.
\end{itemize}
Assumption \ref{assump:neural_operator_class} (b) extends these typical UATs by requiring control over the Lipschitz constant of the output function $(\FfOpTilde_{\params_l}(Q))(x)$, not just the Lipschitz constant of the operator $\FfOpTilde_{\params_l}$ itself (mapping function to function in sup-norm). This is justified by constructing $\NnOp_{\params_l}$ to be Lipschitz as a map $\R^M \to \R^M$ and to have bounded outputs, and choosing a decoder $\mathcal{D}_M$ that translates these properties into a controlled Lipschitz constant for the resulting function on $K_Q$.

\section{Illustrative Example of Bellman Iteration and Regularity}
\label{app:illustrative_example}

This appendix provides a highly simplified example to illustrate the iterative application of the Bellman operator and the concept of regularity propagation, as discussed in Section \ref{sec:q_fbsde_scheme} and Lemma \ref{lem:bellman_op_props}. This example is purely pedagogical and does not involve neural network approximation, but its dynamics can be visualized.

Consider a one-dimensional continuous state space $\StateSpace = [0,1]$.
Let the action space be discrete, $\ActionSpace = \{a_L, a_R\}$, representing "move left" and "move right".
The dynamics are deterministic. For a state $s \in [0,1]$ and a small step size $\Delta s = 0.1$:
\begin{itemize}
    \item Action $a_L$: $s' = \max(0, s - \Delta s)$.
    \item Action $a_R$: $s' = \min(1, s + \Delta s)$.
\end{itemize}
Let the immediate reward be $r(s,a) = r(s) = -(s-0.5)^2$. This reward is bounded ($|r(s)| \le 0.25$) and Lipschitz continuous on $[0,1]$ (e.g., $|r'(s)| = |-2(s-0.5)| \le 1$, so $L_r=1$).
Let the discount factor be $\gamma = 0.9$.
We consider a finite horizon of $N=2$ "steps to go". The value functions $Q^{(k)}(s,a)$ are iterates, where $k$ represents the number of steps of Bellman updates performed, starting from an initial guess. $Q^{(0)}$ is the initial guess, and $Q^{(N)}$ is the Q-function after $N$ updates. For this example, $N=2$, so we compute $Q^{(0)}, Q^{(1)}, Q^{(2)}$. Here, $Q^{(2)}$ will be the optimal Q-function for a problem that lasts two stages from the current decision point.

The Bellman iteration for $Q^{(k+1)}(s,a)$ from $Q^{(k)}$ is:
$$ Q^{(k+1)}(s,a) = r(s) + \gamma \max_{a' \in \ActionSpace} Q^{(k)}(s'(s,a), a') $$
where $s'(s,a)$ is the state resulting from taking action $a$ in state $s$.

\textbf{Iteration $k=0$ (Initial Q-function):}
$$ Q^{(0)}(s,a) = 0 \quad \text{for all } s \in [0,1], a \in \{a_L, a_R\}. $$
This function is trivially bounded (by $M_0=0$) and Lipschitz continuous (with $L_0=0$). Numerical computation confirms $Q^{(0)}(s,a)$ is identically zero (Figure \ref{fig:bellman_illustration_appendix}, top panel).

\textbf{Iteration $k=1$:}
Since $\max_{a'} Q^{(0)}(s'(s,a), a') = 0$, we have:
$$ Q^{(1)}(s,a) = r(s) + \gamma \cdot 0 = -(s-0.5)^2. $$
Note that $Q^{(1)}(s,a)$ is independent of $a$ in this case.
\begin{itemize}
    \item \textbf{Boundedness:} $\supnorm{Q^{(1)}} = 0.25$.
    \item \textbf{Lipschitz Continuity:} As $r(s)$ is $L_r=1$-Lipschitz, $Q^{(1)}(s,a)$ is $1$-Lipschitz with respect to $s$. It is constant (and thus $0$-Lipschitz) with respect to $a$.
\end{itemize}
Numerical computation confirms $Q^{(1)}(s,a) = -(s-0.5)^2$ with negligible error (max absolute difference of $0.00 \times 10^{0}$ from the analytic form). This is visualized in Figure \ref{fig:bellman_illustration_appendix} (middle panel), where both action curves overlap.

\textbf{Iteration $k=2$ (Q-function after two updates):}
Let $s_L'(s) = \max(0, s - \Delta s)$ and $s_R'(s) = \min(1, s + \Delta s)$.
The term $\max_{a' \in \ActionSpace} Q^{(1)}(s'(s,a), a')$ becomes $V^{(1)}(s'(s,a))$, where $V^{(1)}(s) = \max_{a^*} Q^{(1)}(s,a^*) = -(s-0.5)^2$.
So, $Q^{(2)}(s,a)$ is:
\begin{align*}
    Q^{(2)}(s,a_L) &= r(s) + \gamma V^{(1)}(s_L'(s)) = -(s-0.5)^2 + \gamma \left( - (s_L'(s) - 0.5)^2 \right) \\
    Q^{(2)}(s,a_R) &= r(s) + \gamma V^{(1)}(s_R'(s)) = -(s-0.5)^2 + \gamma \left( - (s_R'(s) - 0.5)^2 \right)
\end{align*}
\begin{itemize}
    \item \textbf{Boundedness:} Since $r(s)$ and $V^{(1)}(s)$ are bounded, $Q^{(2)}(s,a)$ is clearly bounded. For example, $\supnorm{Q^{(2)}} \le \supnorm{r} + \gamma \supnorm{V^{(1)}} = 0.25 + \gamma \cdot 0.25 = 0.25(1+\gamma)$.
    \item \textbf{Lipschitz Continuity (w.r.t. $s$):}
        The function $s \mapsto s_L'(s)$ is $1$-Lipschitz. The function $x \mapsto V^{(1)}(x)$ is $1$-Lipschitz on $[0,1]$. The composition $s \mapsto V^{(1)}(s_L'(s))$ is $1$-Lipschitz.
        Thus, $Q^{(2)}(s,a_L)$ is a sum of $r(s)$ ($1$-Lipschitz) and $\gamma V^{(1)}(s_L'(s))$ ($\gamma \cdot 1$-Lipschitz). Its Lipschitz constant w.r.t. $s$ is bounded by $1+\gamma$. A similar argument holds for $Q^{(2)}(s,a_R)$.
\end{itemize}
Numerical computation yields $Q^{(2)}$ values that match these analytical forms with very small differences (max absolute difference of approximately $5.55 \times 10^{-17}$), primarily due to interpolation of $V^{(1)}(s')$ values when $s'$ falls between discretized state points. The distinct shapes for $Q^{(2)}(s,a_L)$ and $Q^{(2)}(s,a_R)$ are visible in Figure \ref{fig:bellman_illustration_appendix} (bottom panel).

\begin{figure}[htbp]
    \centering
    \includegraphics[width=0.7\textwidth]{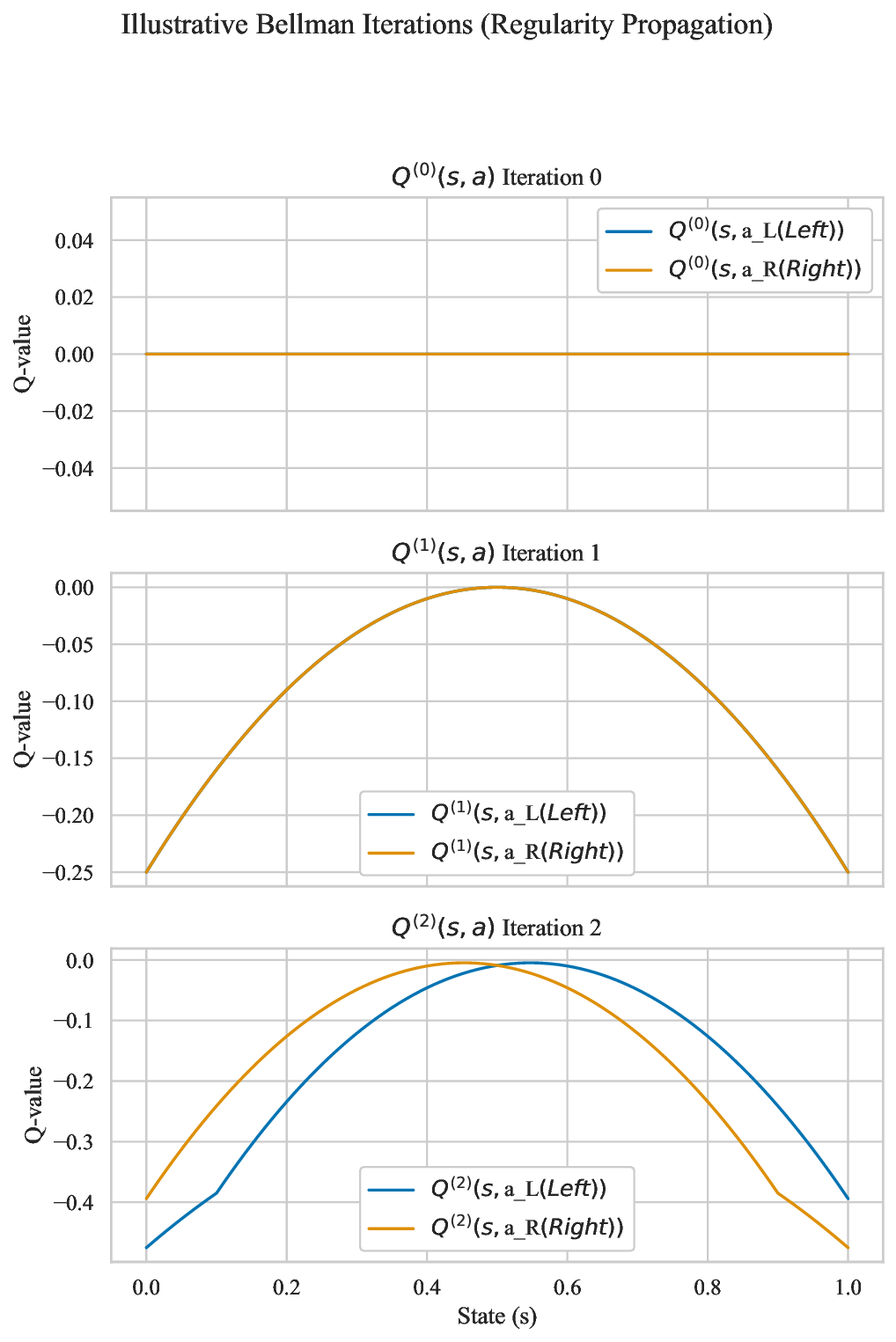}
    \caption{Illustration of Bellman iterates $Q^{(k)}(s,a)$ for $k=0, 1, 2$. The state space is $s \in [0,1]$. Actions $a_L$ (move left) and $a_R$ (move right) are shown. $Q^{(0)}$ is zero. $Q^{(1)}$ is identical for both actions. $Q^{(2)}$ shows distinct values for $a_L$ and $a_R$, reflecting the one-step lookahead with $V^{(1)}$. All iterates are bounded and visually Lipschitz continuous.}
    \label{fig:bellman_illustration_appendix}
\end{figure}

This simple example, supported by the numerical results and Figure \ref{fig:bellman_illustration_appendix}, demonstrates:
\begin{enumerate}[noitemsep]
    \item The iterative nature of the Bellman operator, transforming an initial Q-function estimate.
    \item That if $Q^{(k)}$ is bounded and Lipschitz, then $Q^{(k+1)}$ remains bounded and Lipschitz. The visual smoothness and boundedness of the functions in Figure \ref{fig:bellman_illustration_appendix} are consistent with this. This aligns with the findings of Lemma \ref{lem:bellman_op_props}(d).
    \item The functions $Q^{(k)}(s,a)$ are well-behaved (continuous, Lipschitz) under standard assumptions on rewards and dynamics, even with the $\max$ operations.
\end{enumerate}
While highly simplified, it captures the essence of the function sequence $Q^{(k)}$ converging to $\Qstar$ within a space of regular functions.

\end{document}